\relax
%File: formatting-instructions-latex-2021.tex
%release 2021.1
\documentclass[letterpaper]{article} % DO NOT CHANGE THIS
\usepackage{aaai21}  % DO NOT CHANGE THIS
\usepackage{times}  % DO NOT CHANGE THIS
\usepackage{helvet} % DO NOT CHANGE THIS
\usepackage{courier}  % DO NOT CHANGE THIS
\usepackage[hyphens]{url}  % DO NOT CHANGE THIS
\usepackage{graphicx} % DO NOT CHANGE THIS
\urlstyle{rm} % DO NOT CHANGE THIS
  % DO NOT CHANGE THIS
\usepackage{graphicx}  % DO NOT CHANGE THIS
\usepackage{natbib}  % DO NOT CHANGE THIS AND DO NOT ADD ANY OPTIONS TO IT
\usepackage{caption} % DO NOT CHANGE THIS AND DO NOT ADD ANY OPTIONS TO IT
\frenchspacing  % DO NOT CHANGE THIS
\setlength{\pdfpagewidth}{8.5in}  % DO NOT CHANGE THIS
\setlength{\pdfpageheight}{11in}  % DO NOT CHANGE THIS

% Packages added by the authors
\usepackage{tikz}
\usepackage{amsmath}
\usepackage{amsthm}
\usepackage{amssymb}
\usepackage{mathtools}
\usepackage{nicefrac}
\usepackage[linesnumbered,algoruled,commentsnumbered]{algorithm2e}
\usepackage{todonotes}

\nocopyright
%PDF Info Is REQUIRED.
% For /Author, add all authors within the parentheses, separated by commas. No accents or commands.
% For /Title, add Title in Mixed Case. No accents or commands. Retain the parentheses.
\pdfinfo{
/Title (The Tractability of SHAP-Score-Based Explanations over Deterministic and Decomposable Boolean Circuits)
/Author (Marcelo Arenas, Pablo Barcelo, Leopoldo Bertossi, Mikael Monet)
/TemplateVersion (2021.1)
} %Leave this

\setcounter{secnumdepth}{2} %May be changed to 1 or 2 if section numbers are desired.

% The file aaai21.sty is the style file for AAAI Press
% proceedings, working notes, and technical reports.
%

% Title

% Your title must be in mixed case, not sentence case.
% That means all verbs (including short verbs like be, is, using,and go),
% nouns, adverbs, adjectives should be capitalized, including both words in hyphenated terms, while
% articles, conjunctions, and prepositions are lower case unless they
% directly follow a colon or long dash

%Example, Multiple Authors, ->> remove \iffalse,\fi and place them surrounding AAAI title to use it
\title{The Tractability of SHAP-Score-Based Explanations
 over Deterministic and Decomposable Boolean Circuits} 
%\title{The Tractability of SHAP-Score-Based Explanations for Classification
% over Deterministic and Decomposable Boolean Circuits} 

\author {
        Marcelo Arenas\textsuperscript{\rm 1,2,3},
        Pablo Barceló\textsuperscript{\rm 2,3},
        Leopoldo Bertossi\textsuperscript{\rm 4,3},
        Mikaël Monet\textsuperscript{\rm 5}\\
%        Fourth Author Name \textsuperscript{\rm 4} \\
%Paper number 3803
}
\affiliations {
    % Affiliations
    \textsuperscript{\rm 1} Department of Computer Science, Universidad Cat\'olica de Chile\\
    \textsuperscript{\rm 2} Institute for Mathematical and Computational Engineering, Universidad Cat\'olica de Chile\\
    \textsuperscript{\rm 3} IMFD Chile\\
    \textsuperscript{\rm 4} Universidad Adolfo Ibáñez, FIC, Chile\\
    \textsuperscript{\rm 5} Inria Lille -- Nord Europe \& UMR 9189 CRIStAL, France\\
%    firstAuthor@affiliation1.com, secondAuthor@affilation2.com, thirdAuthor@affiliation1.com, fourthAuthor@affiliation1.com
}

% Added by the authors

\newcommand\sat{\mathsf{SAT}}

\newcommand\ssat{\mathsf{\#SAT}}

\newcommand\var{\mathsf{var}}

\newcommand\out{\mathsf{out}}
\newcommand\sims{\mathsf{sim}}

\renewcommand\H{\mathsf{H}}
\newcommand\diff{\mathsf{Diff}}

\newcommand{\hrulealg}[0]{\vspace{1mm} \hrule \vspace{1mm}}

\newcommand{\pr}{\mathrm{p}}
\newcommand{\prd}{\Pi_\pr}

\newcommand{\es}{\mathbf{e}}
\newcommand{\eset}{\mathsf{ent}}

\newcommand{\shap}{\mathsf{SHAP}}

\newcommand{\shp}{\#\text{\textsc{P}}}
\newcommand{\asm}{\mathsf{cw}}

%For tikz pictures
\usetikzlibrary{arrows,positioning} 
\tikzset{
	rect/.style={
		rectangle,
		rounded corners,
		draw=black, 
		thick,
		text centered},
	rectw/.style={
		rectangle,
		rounded corners,
		draw=white, 
		thick,
		text centered},
	sq/.style={
		rectangle,
		draw=black, 
		thick,
		text centered},
	sqw/.style={
		rectangle,
		thick,
		text centered},
	arrout/.style={
		->,
		-latex,
		thick,
	},
	arrin/.style={
		<-,
		latex-,
		thick,
		El         },
	arrd/.style={
		<->,
		>=latex,
		thick,
	},
	arrw/.style={
		thick,
	}
}

\tikzset{
	circ/.style={
		circle,
		draw=black, 
		thick,
		text centered,
	},
	circw/.style={
		circle,
		draw=white, 
		thick,
		text centered,
	},
	arrout/.style={
		->,
		-latex,
		thick,
	},
	arrin/.style={
		<-,
		latex-,
		thick,
	},
	arrw/.style={
		-,
		thick,
	},
	arrww/.style={
		-,
		thick,
		draw=white, 
	}
}

%%%%%%%%%%%%%%%%%%%%%%%%%%%%%%%%
%			COMMENTS
%%%%%%%%%%%%%%%%%%%%%%%%%%%%%%%%

%%%% SELECT THESE TO SEE SPACE WITHOUT COMMENTS
%\newcommand{\leo}[1]{}
%\newcommand{\pablo}[1]{}
%\newcommand{\mikael}[1]{}

\newcommand{\ignore}[1]{}

\newcommand{\feat}[1]{\mbox{\bf \footnotesize #1}}

\newtheorem{theorem}{Theorem}[section]

\newtheorem{corollary}[theorem]{Corollary}
\newtheorem{definition}[theorem]{Definition}
\newtheorem{example}[theorem]{Example}

\newtheorem{lemma}[theorem]{Lemma}

\allowdisplaybreaks

\usetikzlibrary{positioning}

\begin{document}

\maketitle

\begin{abstract}
%\textbf{Abstract.}
Scores based on Shapley values are 
widely used for providing
explanations to classification results over machine learning models.  A prime
example of this is the influential~$\shap$-score, a version of the Shapley
value that can help explain the result of a learned model on a specific entity
by assigning a score to every feature.
While in general computing Shapley values is a computationally intractable
problem, it has recently been claimed that the~$\shap$-score can be computed in
polynomial time over the class of decision trees. In this paper, we provide a
proof of a stronger result over Boolean models: the $\shap$-score can be
computed in polynomial time over \emph{deterministic and decomposable Boolean
circuits}.
Such
circuits, also known as {\em tractable Boolean circuits}, generalize a
wide range of Boolean circuits and binary decision diagrams classes, including
binary decision trees, Ordered Binary Decision Diagrams (OBDDs) and Free Binary Decision Diagrams (FBDDs).
We also establish the computational limits of the notion of
SHAP-score by observing that, under a mild condition, computing it over a
class of Boolean models is always polynomially as hard as the model counting
problem for that class.
This implies that 
both determinism and decomposability are essential properties for the circuits that we
consider, as removing one or the other renders the problem of
computing the~$\shap$-score intractable (namely,~$\shp$-hard).

\end{abstract}

\section{Introduction}
\label{sec:intro}
Explainable artificial intelligence
has become an active area of research. Central to it is the 
observation
that artificial intelligence (AI) and machine learning (ML) models cannot always be blindly applied without being able to
interpret and explain their results. 
For example, when someone applies for a loan 
and sees 
their application rejected 
by
an algorithmic decision-making system, the system should be able to
provide an explanation for that decision. 
Explanations can be {\em global} -- focusing on the general input/output relation of the model --, or {\em local} 
-- focusing on how 
features
affect 
the decision of the model
for a specific input. 
Recent literature has strengthened the importance of the latter by showing their ability to provide explanations 
that are often overlooked by global explanations~\cite{molnar}. 

One natural way of providing local explanations for classification models consists in assigning {\em numerical scores} to the feature values of an entity that has gone through the classification process. Intuitively, the higher the score of a feature value, the more relevant it should be considered.
It is in this context that the \emph{$\shap$-score} has been introduced~\cite{lundberg2017unified,lundberg2020local}. 
This recent notion has rapidly gained attention and is becoming
influential. 
There are two properties of the $\shap$-score that 
support its rapid adoption. First, its definition is quite general and can be applied to any kind of classification model.
Second, the definition of the $\shap$-score
is grounded on the well-known {\em Shapley value}~\cite{shapley1953value,roth1988shapley}, that has already been used successfully in 
 several domains of computer science; see, e.g., 
\cite{hunter2010measure,LBKS20,michalak2013efficient,cesari2018application}. 
Thus, $\shap$-scores have a clear, intuitive, combinatorial meaning, 
and inherit all the desirable properties of the Shapley value.

For a given
classifier~$M$, entity~$\es$ and feature~$x$,
the~$\shap$-score~$\shap(M,\es,x)$ intuitively represents the importance of the
feature value~$\es(x)$ to the classification result~$M(\es)$.  In its general
formulation,~$\shap(M,\es,x)$ is a weighted average of differences of expected
values of the outcomes
(c.f.~Section~\ref{sec:preliminaries} for its formal
definition).  Unfortunately, computing quantities that are based on the notion
of Shapley value is in general intractable. Indeed, in many scenarios the
computation turns out to be
$\shp$-hard~\cite{faigle1992shapley,deng1994complexity,LBKS20,BLSSV20}, which makes the
notion difficult to use -- if not impossible -- for practical purposes \cite{AB09}. Therefore,
a natural question is: For what kinds of classification models the computation of the~$\shap$-score can be
done efficiently? This is the subject of this paper.

In this work, we focus on classifiers working with \emph{binary
feature values} (i.e., propositional features that can take the values ~$0$ or~$1$), and
that return~$1$ (accept) or~$0$ (reject) for each entity.  We will call these \emph{Boolean classifiers}. 
The second assumption that we make is that the underlying probability
distribution on the population of entities is what we call a \emph{product
distribution}, where each binary feature~$x$ has a probability~$\pr(x)$ of being
equal to~$1$, independently of the other features.
We note here that the restriction to binary inputs can be relevant
in many practical scenarios where the features are of a propositional nature.

More specifically,
we investigate Boolean classifiers defined as
\emph{deterministic and decomposable Boolean circuits},
a widely studied model
in \emph{knowledge compilation}~\cite{darwiche2001tractability,DBLP:journals/jair/DarwicheM02}.
Such circuits encompass a wide range of Boolean models and binary
decision diagrams classes that are considered in knowledge compilation, and in
AI more generally.  For instance, they generalize {\em binary decision trees}, {\em ordered
binary decision diagrams} (OBDDs), {\em free binary decision diagrams} (FBDDs), and 
{\em deterministic and decomposable negation normal norms} (d-DNNFs) 
\cite{darwiche2001tractability,ACMS20,darwiche2020reasons}.
These circuits are also known under the name of {\em tractable Boolean circuits},
that is used in recent
literature~\cite{shih2019verifying,shi2020tractable,shih2018formal,shih2018symbolic,shih2019smoothing,peharz2020einsum}.
We provide an example of a deterministic and decomposable Boolean circuit next (and give the formal definition in Section~\ref{sec:preliminaries}). 

\medskip

\begin{example} \label{ex:class}
{\em We want to classify papers submitted to a conference as rejected
(Boolean value $0$) or accepted (Boolean value $1$). Papers are
described by features \feat{fg}, \feat{dtr}, \feat{nf} and \feat{na},
which stand for ``follows guidelines", ``deep theoretical result", ``new
framework" and ``nice applications", respectively.  The Boolean classifier
for the papers is given by the Boolean circuit in
Figure \ref{fig:ddbc-exa}. The input of this circuit are the
features \feat{fg}, \feat{dtr}, \feat{nf} and \feat{na}, each of which
can take value either $0$ or $1$, depending on whether the feature is
present~($1$) or absent~($0$). The nodes with labels~$\neg$, $\lor$ or~$\land$ are logic gates, and the associated Boolean value of
each one of them depends on the logical connective represented by its
label and the Boolean values of its inputs. The output 
value
of the circuit is given by 
the top node in the figure.

The Boolean circuit in
Figure \ref{fig:ddbc-exa} is said to be {\em decomposable}, because for each
$\land$-gate, the sets of features of its inputs are pairwise
disjoint. For instance, in the case of the top node in
Figure \ref{fig:ddbc-exa}, the left-hand side input has
$\{\feat{fg}\}$ as its set of features, while its right-hand side
input has $\{\feat{dtr}, \feat{nf}, \feat{na}\}$ as its set of
features, which are disjoint. Also, this circuit is said to be {\em deterministic},
which means that for every $\lor$-gate, two (or more) of its inputs cannot be given
value 1 by the same Boolean assignment for the features. For instance,
in the case of the only $\lor$-gate in Figure \ref{fig:ddbc-exa}, if a
Boolean assignment for the features gives value 1 to its left-hand side
input, then feature \feat{dtr} has to be given value 1 and, thus, such
an assignment gives value $0$ to the right-hand side input of the~$\lor$-gate. In the same way, it can be shown that if a
Boolean assignment for the features gives value 1 to the right-hand side input of this $\lor$-gate, then it gives value $0$ to its left-hand side input.
\qed}
\end{example}

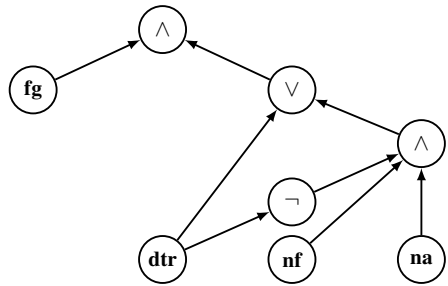
\begin{figure}
\begin{center}
\begin{center}
\resizebox{0.7\columnwidth}{!}{
\begin{tikzpicture}
  \node[circ, minimum size=7mm, inner  sep=-2] (n1) {\feat{dtr}};
  \node[circ, right=12mm of n1, minimum size=7mm] (n2) {\feat{nf}};
  \node[circ, above=1.3mm of n2, minimum size=7mm] (nneg) {$\neg$}
    edge[arrin] (n1);
  \node[circ, right=12mm of n2, minimum size=7mm] (n3) {\feat{na}};
  \node[circ, above=10mm of n3, minimum size=7mm] (n4) {$\land$}
  edge[arrin] (nneg)
  edge[arrin] (n2)
  edge[arrin] (n3);
  \node[circ, above=18mm of n2, minimum size=7mm] (n5) {$\lor$}
  edge[arrin] (n4)
  edge[arrin] (n1);
  \node[circ, above=27mm of n1, minimum size=7mm] (n6) {$\land$}
  edge[arrin] (n5);
  \node[circw, left=12mm of n5, minimum size=7mm] (n5a) {};
  \node[circ, left=12mm of n5a, minimum size=7mm, inner sep=-2] (n0) {\feat{fg}}
  edge[arrout] (n6);
\end{tikzpicture}} 
\end{center}
\caption{A deterministic and decomposable Boolean Circuit as a classifier. \label{fig:ddbc-exa}}
\end{center}
\end{figure}

Readers who are not familiar with knowledge
compilation can simply think about deterministic and decomposable
circuits as a 
tool for establishing in a uniform manner the
tractability of computing $\shap$-scores on several Boolean
classifier classes.
Our main contributions are the following:

\begin{enumerate}
\item
We provide a polynomial time algorithm that computes the $\shap$-score for 
deterministic and decomposable
Boolean circuits, in the special case of \emph{uniform probability distributions} (that is, when each~$\pr(x)$ is~$\frac{1}{2})$. In particular, this provides a precise proof of the claim made in \cite{lundberg2020local} that the $\shap$-score for Boolean classifiers given as decision trees can be computed in polynomial time.
Moreover, 
we also obtain as a corollary that the $\shap$-score for
Boolean classifiers given as
OBDDs, FBDDs and d-DNNFs
can
be computed in polynomial time.
\item
We observe that computing the $\shap$-score on Boolean circuits in a
class is always polynomially as hard as the {\em model counting}
problem
for that class (under a
mild condition).
By using this observation, we obtain that each one of the {\em
determinism} assumption and the {\em decomposability} assumption is necessary
for tractability.
\item
Last, we show that the results above (and most interestingly, the
polynomial-time algorithm) can be extended to the $\shap$-score defined on
product distributions for the entity population.  
\end{enumerate}

Our contributions should be compared to the results obtained in the
contemporaneous paper~\cite{broeck2020tractability}. There, the authors
establish the following theorem: for every class~$\mathcal{C}$ of
classifiers and under product distributions, the problem of computing
the~$\shap$-score for~$\mathcal{C}$ is polynomial-time equivalent to the
problem of computing the expected value for the models in~$\mathcal{C}$.
Since computing expectations is in polynomial time for tractable
Boolean circuits, this in particular implies that computing
the~$\shap$-score is in polynomial time for the circuits that we
consider; in other words, their results capture ours. However, there
is a fundamental difference in the approach taken to show
tractability: their reduction uses multiple oracle calls to the
problem of computing expectations, whereas we provide a more direct
algorithm to compute the~$\shap$-score on these circuits.

%We think that
Our algorithm for computing the~$\shap$-score could be used in practical scenarios.
Indeed, recently, some classes of classifiers have been compiled
into tractable Boolean circuits.  This is the case, for instance, of Bayesian
Classifiers~\cite{shih2018symbolic}, Binary Neural
Networks~\cite{shi2020tractable}, and Random
Forests~\cite{DBLP:journals/corr/abs-2007-01493}.  The idea is to start with a
Boolean classifier~$M$ given in a formalism that is hard to interpret -- for
instance a Binary neural network -- and to compute a tractable Boolean
circuit~$M'$ that is equivalent to~$M$ (this computation can be expensive). One
can then use~$M'$ and the nice properties of tractable Boolean circuits to
interpret the decisions of the model. Hence, this makes it possible to apply
the results in this paper on the $\shap$-score to those classes of classifiers.

\medskip

%\noindent 
{\bf Paper structure.}
We give preliminaries in Section~\ref{sec:preliminaries}.
In
Section~\ref{sec:shapscore-d-Ds}, we prove that~the $\shap$-score can be computed
in polynomial time for deterministic and decomposable Boolean circuits for uniform probability distributions. In
Section~\ref{sec:limits} we establish the limits of the tractable computation of the~$\shap$-score.
Next we show in Section~\ref{sec:prod} that our
results extend to the setting where we consider product distributions.
We conclude and discuss future work in Section~\ref{sec:discussion}.

\section{Preliminaries}
\label{sec:preliminaries}
\label{sec:prelim} 

%\paragraph*{Entities, distributions and classifiers.}
\subsection{Entities, distributions and classifiers}
Let~$X$ be a finite set of \emph{features}, also called
\emph{variables}. An \emph{entity} over~$X$ is a function~$\es:X \to
\{0,1\}$.
We denote by~$\eset(X)$ the set of all entities over~$X$.  On this set, we consider
the {\em uniform probability distribution}, i.e., for an event $E \subseteq \eset(X)$, we have that $P(E) := \frac{|E|}{2^{|X|}}$.
We will come back to this assumption in Section~\ref{sec:prod}, where we will consider the more general product distributions (we start with the uniform distribution to ease the presentation).

A \emph{Boolean classifier}~$M$ over~$X$ is 
a
function~$M : \eset(X) \to \{0,1\}$ that maps every entity over~$X$
to~$0$ or~$1$.  We say that~$M$ \emph{accepts} an entity~$\es$
when~$M(\es)=1$, and that it \emph{rejects} it if~$M(\es)=0$. Since we consider $\eset(X)$ to be a probability space, $M$ can be regarded
as a random variable.

%\paragraph*{The $\mathbf{\shap}$-score over Boolean classifiers.}
\subsection{The $\mathbf{\shap}$-score over Boolean classifiers}

Let $M : \eset(X) \to \{0,1\}$ be a Boolean classifier over the set~$X$ of features.  Given an entity~$\es$ over~$X$ and a
subset~$S \subseteq X$ of features, the set 
$\asm(\es, S) \coloneqq \{ \es' \in \eset(X) \mid \es'(x) =
\es(x) $ for each~$x \in S\}$ contains
those 
entities that coincide with~$\es$ over each feature in~$S$. In other words, $\asm(\es, S)$ is 
the set of entities that are \emph{consistent with} $\es$ on $S$.  Then, given an entity~$\es \in
\eset(X)$ and~$S \subseteq X$, we define the {\em
expected value of $M$ over ~$X \setminus S$ with respect to~$\es$}~as
\begin{align*}
\phi(M,\es,S) \ &\coloneqq \
\mathbb{E}\big[M(\es') \mid \es'\ \in \asm(\es, S) \big].
\end{align*}

Since we consider the uniform distribution over~$\eset(X)$, we have that

\begin{align*}
\phi(M,\es,S)= \sum_{\es' \in \asm(\es,S)} \frac{1}{2^{|X\setminus S|}} M(\es').
\end{align*}
Intuitively,~$\phi(M,\es,S)$ is 
the probability that~$M(\es') = 1$, conditioned on the
inputs~$\es' \in \eset(X)$ to coincide with~$\es$ over each feature
in~$S$. 
This function is then used in the general formula of the Shapley value \cite{shapley1953value,roth1988shapley} to obtain the $\shap$-score for feature values in 
$\es$.
\begin{definition}\label{def:Shapley} 
Given a Boolean classifier~$M$ over a set of features~$X$, an entity~$\es$
over~$X$, and a feature~$x \in X$, the {\em $\shap$ score of
feature~$x$ on~$\es$ with respect to~$M$} is defined~as\newpage
\begin{multline}\label{eq:shapscoredef}
\shap(M,\es,x) \ \coloneqq \ \sum_{S \subseteq X\setminus\{x\}}
\frac{|S|! \, (|X| - |S| - 1)!}{|X|!} \bigg(\\
\phi(M, \es,S \cup \{x\}) - \phi(M, \es,S)\bigg).
\end{multline}
\end{definition}

Thus, $\shap(M,\es,x)$ is a weighted average of the contribution of feature $x$ on $\es$
to the classification result, i.e., of the differences between having it and
not, under all possible permutations of the other feature
values.
Observe that, from this definition, a high positive value
of~$\shap(M,\es,x)$ intuitively means that setting~$x$ to~$\es(x)$
strongly leans the classifier towards acceptance, while a high
negative value of~$\shap(M,\es,x)$ means that setting~$x$ to~$\es(x)$
strongly leans the classifier towards rejection.

%\paragraph*{Deterministic and decomposable Boolean circuits.}
\subsection{Deterministic and decomposable Boolean circuits}

A Boolean circuit over a set of variables~$X$ is a directed
acyclic graph~$C$ such that
\begin{enumerate}
\item[(i)] Every node without incoming edges is either a {\em variable
  gate} or a {\em constant gate}. A variable gate is labeled with a
  variable from~$X$, and a constant gate is labeled with either~$0$ or~$1$;

\item[(ii)] Every node with incoming edges is a {\em
  logic gate}, and is labeled with a symbol~$\land$,~$\lor$
  or~$\lnot$. If it is labeled with the symbol~$\lnot$, then
  it has exactly one incoming edge;\footnote{Recall that the fan-in of a gate is the number of its input gates. In our definition of Boolean circuits, we allow
    unbounded fan-in~$\land$- and~$\lor$-gates.}

\item[(iii)] Exactly one node does not have any outgoing edges, and
  this node is called the {\em output gate of~$C$}.
\end{enumerate}
Such a Boolean circuit~$C$ represents a Boolean classifier in the expected way -- we assume the reader to be familiar with Boolean logic --,
and we write~$C(\es)$ for the value in~$\{0,1\}$ of the output gate of~$C$ when we evaluate~$C$ over the entity~$\es$.

Several restrictions of Boolean circuits with good computational
properties have been studied. Let~$C$ be a Boolean circuit over a set
of variables~$X$ and~$g$ a gate of~$C$. The Boolean circuit~$C_g$
over~$X$ is defined by considering the subgraph of~$C$ induced by the
set of gates~$g'$ in~$C$ for which there exists a path from~$g'$
to~$g$ in~$C$. Notice that~$g$ is the output gate of~$C_g$. The
set~$\var(g)$ is defined as the set of variables~$x \in X$ such that
there exists a variable gate with label~$x$ in~$C_g$. Then, an~$\lor$-gate~$g$ of~$C$ is said to be
\emph{deterministic} if for every pair~$g_1$,~$g_2$ of distinct input
gates of~$g$, the Boolean circuits~$C_{g_1}$ and~$C_{g_2}$ are
disjoint in the sense that there is no entity~$\es$ that is accepted
by both~$C_{g_1}$ and~$C_{g_2}$ (that is, there is no entity~$\es \in
\eset(X)$ such that~$C_{g_1}(\es) = C_{g_2}(\es) = 1$). The
circuit~$C$ is called \emph{deterministic} if every~$\lor$-gate of~$C$
is deterministic. An~$\land$-gate~$g$ of~$C$ is said to be
\emph{decomposable} if for every pair~$g_1$,~$g_2$ of distinct input
gates of~$g$, we have that~$\var(g_1) \cap \var(g_2) =
\emptyset$. Then,~$C$ is called \emph{decomposable} if every
$\land$-gate of~$C$ is decomposable.

\medskip

\begin{example} {\em In Example \ref{ex:class}, we explained at an intuitive level why the Boolean circuit in Figure \ref{fig:ddbc-exa} is deterministic and decomposable. By using the terminology defined in the previous paragraph, it can be formally checked that this Boolean circuit indeed satisfies these conditions. \qed}
\end{example}

As mentioned before, 
deterministic and decomposable Boolean circuits
generalize many decision diagrams and Boolean circuits classes.
We refer 
to~\cite{darwiche2001tractability,ACMS20} for detailed studies of
knowledge compilation classes and of their precise relationships.
For the reader's convenience, we explain in the supplementary material
how FBDDs and binary decision trees can be encoded in linear time as
deterministic and decomposable Boolean circuits.

\section{Tractable Computation of the~$\shap$-Score} 
\label{sec:shapscore-d-Ds}

In this section, we prove our first tractability result, namely, that
computing the~$\shap$-score for Boolean classifiers given as
deterministic and decomposable Boolean circuits can be done in
polynomial time, for uniform probability distributions. Formally:

\begin{theorem}
\label{thm:shapscore-d-Ds} 
The following problem can be solved in polynomial time.
Given as input a deterministic and decomposable Boolean circuit~$C$ over a set of features~$X$,
an entity~$\es:X\to \{0,1\}$, and a feature~$x\in X$,
compute the value~$\shap(C,\es,x)$. 
\end{theorem}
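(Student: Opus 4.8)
The plan is to reduce the computation of $\shap(C,\es,x)$ to a collection of simpler counting quantities that can be extracted from the circuit $C$ by a single bottom-up dynamic-programming pass exploiting determinism and decomposability. The first step is to rewrite the defining sum in Equation~\eqref{eq:shapscoredef}. Since the weight depends only on $|S|$, I would group the sum over $S \subseteq X \setminus \{x\}$ by cardinality $k = |S|$, so that $\shap(C,\es,x)$ becomes a weighted sum over $k$ of the quantities $\sum_{|S|=k} \phi(C,\es,S\cup\{x\})$ and $\sum_{|S|=k} \phi(C,\es,S)$. Expanding $\phi$ using the uniform-distribution formula, each such inner sum is, up to normalization, a count of pairs $(S, \es')$ with $|S|=k$, $\es'$ consistent with $\es$ on $S$ (or on $S \cup \{x\}$), and $C(\es')=1$. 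The key combinatorial observation is that this double count can be re-expressed: for a fixed satisfying entity $\es'$, the number of sets $S$ of size $k$ on which $\es'$ agrees with $\es$ depends only on the Hamming distance between $\es'$ and $\es$. Thus everything reduces to knowing, for each distance $d$, the number of satisfying entities $\es'$ of $C$ at Hamming distance exactly $d$ from $\es$ --- and a variant of this where we additionally condition on the value of the distinguished feature $x$.

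The second step is therefore to compute, for the circuit $C$, the generating polynomial whose coefficient of $d$ is the number of accepting entities at Hamming distance $d$ from $\es$ (and the analogous polynomials for the subcircuit where $x$ is pinned to $\es(x)$ and where $x$ is pinned to $1-\es(x)$). I would do this by induction on the structure of $C$: at each gate $g$, maintain the polynomial $p_g(t) = \sum_d (\text{number of entities } \es' \text{ over } \var(g) \text{ with } C_g(\es')=1 \text{ at distance } d \text{ from } \es\restriction_{\var(g)})\, t^d$. For a variable gate or constant gate this is immediate. For a $\lnot$-gate we use $\#\mathrm{SAT}$-style complementation: the accepting entities of $\lnot g$ over $\var(g)$ are exactly those not accepting for $g$, and their distance-generating polynomial is $(1+t)^{|\var(g)|} - p_g(t)$ evaluated appropriately. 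For a decomposable $\land$-gate, the variable sets of the inputs are disjoint, so the accepting entities factor as a product and the generating polynomials multiply; one must also account, by multiplying by the appropriate $(1+t)$ factors, for the variables of $g$ that do not appear in a given input. For a deterministic $\lor$-gate, the accepting entities of the inputs are pairwise disjoint, so the counts (hence the generating polynomials) simply add, again after padding each input's polynomial to range over all of $\var(g)$ via multiplication by $(1+t)^{|\var(g)\setminus\var(g_i)|}$. Since each polynomial has degree at most $|X|$, every operation is polynomial-time, and the whole pass runs in time polynomial in $|C|$ and $|X|$.

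The final step is bookkeeping: from the three generating polynomials associated with the output gate (unconditioned; $x$ fixed to $\es(x)$; $x$ fixed to $1-\es(x)$) I can read off, for each $k$, the required inner sums $\sum_{|S|=k}\phi(C,\es,S)$ and $\sum_{|S|=k}\phi(C,\es,S\cup\{x\})$ by a closed-form reindexing involving binomial coefficients (converting ``number of $S$ of size $k$ agreeing with $\es$ on a set of coordinates where $\es'$ agrees'' into sums of products of binomials against the distance coefficients). Plugging these into the regrouped version of Equation~\eqref{eq:shapscoredef} and summing over $k$ yields $\shap(C,\es,x)$. I expect the main obstacle to be the second step's correctness argument rather than its complexity: one must carefully check that determinism is exactly what licenses \emph{adding} the $\lor$-inputs' polynomials (otherwise accepting entities would be double-counted) and that decomposability is exactly what licenses \emph{multiplying} the $\land$-inputs' polynomials (otherwise inconsistent partial assignments would be combined), and one must handle the ``padding'' to a common variable set $\var(g)$ uniformly and without off-by-one errors in the binomial reindexing. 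Getting the final algebraic identity relating distance coefficients to the Shapley weights right --- and verifying it is a genuine polynomial-size computation --- is the crux.
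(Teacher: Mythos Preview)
Your proposal is correct and follows essentially the same approach as the paper: both reduce $\shap$ to the array of counts $\ssat(C,\es,\cdot)$ of satisfying assignments stratified by agreement with~$\es$ (your Hamming-distance generating polynomial is exactly this data, reindexed by $d = |X|-k$), and both compute these counts by a bottom-up pass using determinism to justify addition at $\lor$-gates and decomposability to justify convolution/multiplication at $\land$-gates. The only cosmetic differences are that the paper smooths the circuit in preprocessing rather than padding with $(1+t)^{|\var(g)\setminus\var(g_i)|}$ at each $\lor$-gate, and works with the coefficient arrays directly rather than packaging them as polynomials.
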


In particular, since binary decision trees, OBDDs, FBDDs and d-DNNFs are all
restricted kinds of deterministic and decomposable circuits, we obtain as a
consequence of Theorem~\ref{thm:shapscore-d-Ds} that this problem is also in
polynomial time for these classes. For instance, for binary decision trees we
obtain:

\begin{corollary}
\label{cor:shapscore-decision-trees} 
The following problem can be solved in polynomial time.
Given as input a binary decision tree~$T$ over a set of features~$X$,
an entity~$\es:X\to \{0,1\}$, and a feature~$x\in X$, compute
the value~$\shap(T,\es,x)$. 
\end{corollary}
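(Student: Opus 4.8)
The plan is to reduce the problem for binary decision trees to the one already solved in Theorem~\ref{thm:shapscore-d-Ds}, by showing that every binary decision tree can be rewritten in linear time into an equivalent deterministic and decomposable Boolean circuit. First I would describe the encoding. Given a binary decision tree~$T$ over~$X$, I proceed bottom-up over the nodes of~$T$, associating to each node~$v$ a gate~$g_v$ of a Boolean circuit~$C$ so that~$C_{g_v}$ computes exactly the Boolean function computed by the subtree of~$T$ rooted at~$v$. If~$v$ is a leaf labeled by~$b\in\{0,1\}$, then~$g_v$ is a constant gate labeled~$b$. If~$v$ is an internal node testing a feature~$x$, with~$0$-child~$v_0$ and~$1$-child~$v_1$, then I set
\[
g_v \;\coloneqq\; (\lnot x \land g_{v_0}) \;\lor\; (x \land g_{v_1}),
\]
i.e., an if-then-else gadget. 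The output gate of~$C$ is~$g_r$ for the root~$r$ of~$T$. This construction runs in linear time and, by a straightforward induction on the structure of~$T$, yields~$C(\es)=T(\es)$ for every entity~$\es\in\eset(X)$.

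The key step is to verify that~$C$ is deterministic and decomposable. For decomposability, consider an~$\land$-gate of the form~$\lnot x \land g_{v_0}$ (the case~$x \land g_{v_1}$ is symmetric). Here~$\var(\lnot x)=\{x\}$, while~$\var(g_{v_0})$ is the set of features tested in the subtree rooted at~$v_0$. Because in a binary decision tree no feature is tested twice along any root-to-leaf path, the feature~$x$ does not reappear below~$v$, so~$x\notin\var(g_{v_0})$ and the two inputs have disjoint variable sets. For determinism, consider the~$\lor$-gate at~$v$: its left input forces~$x$ to~$0$ (through the literal~$\lnot x$) and its right input forces~$x$ to~$1$ (through~$x$), so no entity can be accepted by both inputs, as required.

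Finally, I would observe that the~$\shap$-score depends only on the Boolean function computed by the model, and not on its syntactic representation: since~$C$ and~$T$ compute the same classifier, $\shap(T,\es,x)=\shap(C,\es,x)$ for every entity~$\es$ and feature~$x$. Hence, given~$T$, $\es$ and~$x$, one first builds~$C$ in linear time and then invokes the polynomial-time algorithm of Theorem~\ref{thm:shapscore-d-Ds} on~$(C,\es,x)$, obtaining~$\shap(T,\es,x)$ in polynomial time overall. I expect the only real subtlety to be the decomposability check, which hinges on the read-once-along-paths property of decision trees; in the general case where a feature could be tested more than once on a path, the repeated tests are forced (the branch taken is determined) and the corresponding subtree simplifies, so this property can be assumed without loss of generality.
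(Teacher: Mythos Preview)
Your proposal is correct and follows essentially the same approach as the paper: the paper derives the corollary directly from Theorem~\ref{thm:shapscore-d-Ds} by noting that binary decision trees (as special FBDDs) can be encoded in linear time as deterministic and decomposable Boolean circuits, using exactly the if-then-else gadget~$g_v = (\lnot x \land g_{v_0}) \lor (x \land g_{v_1})$ that you describe, with the same arguments for determinism and decomposability. Your observation that~$\shap$ depends only on the Boolean function computed is the (implicit) final step in both arguments.
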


The authors of \cite{lundberg2020local} give a
proof of this result, but, unfortunately, with few details to fully
understand it.
Moreover,
it is important to notice that Theorem~\ref{thm:shapscore-d-Ds}
is a nontrivial extension of the result for decision trees, as it is known that
deterministic and decomposable circuits can be exponentially more succinct
than binary decision trees (in fact, than FBDDs)
at representing Boolean
classifiers~\cite{darwiche2001tractability,ACMS20}.

In order to prove Theorem \ref{thm:shapscore-d-Ds}, we need to introduce some notation.
Let~$M$ be a
Boolean classifier over a set of features~$X$.  We
write~$\sat(M)\subseteq \eset(X)$ for the set of entities that are
accepted by~$M$, and~$\ssat(M)$ for the cardinality of this set.
Let~$\es,\es' \in \eset(X)$ be a pair of entities over~$X$. We
define~$\sims(\es,\es') \coloneqq \{x\in
X \mid \es(x)=\es'(x)\}$ to be the set of features on which~$\es$
and~$\es'$ coincide.
Given a Boolean classifier~$M$ over~$X$, an entity~$\es \in \eset(X)$
and a natural number~$k \leq |X|$, we define the
set~$\sat(M,\es,k) \coloneqq \sat(M) \cap \{ \es'
\in \eset(X) \mid |\sims(\es,\es')| = k\}$, in other words, the set of
entities~$\es'$ that are accepted by~$M$ and which coincide with~$\es$
in exactly~$k$ features.
Naturally, we write~$\ssat(M,\es,k)$ for the size of~$\sat(M,\es,k)$.

\medskip

\begin{example} {\em
Let~$M$ be the Boolean classifier represented by the circuit
in Example~\ref{ex:class}.
Then~$\sat(M)$ is the set containing all papers that are accepted
according to
$M$,
so that~$\ssat(M) = 5$.  Now, consider the entity~$\es$ such that
$\es(\feat{fg}) = 1$, $\es(\feat{dtr}) = 1$, $\es(\feat{nf}) = 0$ and
$\es(\feat{na}) = 1$.
Then one can check that $\ssat(M,\es,0) = 0$,
$\ssat(M,\es,1) = 0$, $\ssat(M,\es,2) = 2$, $\ssat(M,\es,3) = 2$ and
$\ssat(M,\es,4) = 1$.
\qed }
\end{example}

Our proof of Theorem~\ref{thm:shapscore-d-Ds} is technical and is
divided into two modular parts. The first part, which is developed in
Section \ref{subsec:shap-to-ssat}, consists in showing that the
problem of computing~$\shap(\cdot,\cdot,\cdot)$ can be reduced in
polynomial time to that of computing~$\ssat(\cdot,\cdot,\cdot)$. This
part of the proof is a sequence of formula manipulations, and it only
uses the fact that deterministic and decomposable circuits can
be efficiently \emph{conditioned} on a variable value (to be defined
in Section \ref{subsec:shap-to-ssat}).
In the second part of the proof, which is developed in
Section \ref{subsec:ssatle}, we show that
computing~$\ssat(\cdot,\cdot,\cdot)$ can be done in polynomial time
for deterministic and decomposable Boolean circuits.
It is in this part that the properties 
of deterministic and decomposable
circuits are really used.

\subsection{Reducing 
$\shap(\cdot,\cdot,\cdot)$ to~$\ssat(\cdot,\cdot,\cdot)$}
\label{subsec:shap-to-ssat}

In this section, we show that for deterministic and decomposable
Boolean circuits, the computation of the~$\shap$-score can be reduced
in polynomial time to the computation
of~$\ssat(\cdot,\cdot,\cdot)$. To achieve this, we will need two more
definitions. Let~$M$ be a Boolean classifier over a set of
features~$X$ and~$x\in X$, and let Boolean
classifiers~$M_{+x}:\eset(X\setminus \{x\}) \to \{0,1\}$
and~$M_{-x}:\eset(X\setminus \{x\}) \to \{0,1\}$ be defined as
follows. For~$\es \in \eset(X\setminus \{x\})$, we 
write~$\es_{+x}$ and~$\es_{-x}$ the entities over~$X$ such that
$\es_{+x}(x)=1$,~$\es_{-x}(x)=0$ and $\es_{+x}(y) = \es_{-x}(y)
= \es(y)$ for every~$y \in X \setminus \{x\}$. Then define
$M_{+x}(\es) \coloneqq M(\es_{+x})$ and~$M_{-x}(\es) \coloneqq
M(\es_{-x})$.  In the literature,~$M_{+x}$ (resp.,~$M_{-x}$) is called
the \emph{conditioning by~$x$ (resp., by~$\lnot x$) of~$M$}.  
Conditioning can be done in linear time for a Boolean
circuit~$C$ by replacing every gate with label~$x$ by a constant gate
with label~$1$ (resp.,~$0$).
We write~$C_{+x}$ (resp.,~$C_{-x}$) for the Boolean circuit obtained via
this transformation. One can easily check that, if~$C$ is 
deterministic and decomposable, then~$C_{+x}$
and~$C_{-x}$ are deterministic and decomposable as well.

We now introduce the second definition needed for the proof.
For a Boolean classifier~$M$ over a set of variables~$X$, an
entity~$\es\in \eset(X)$ and an integer~$k\leq |X|$, we define
\begin{align}
\label{eq:def-H}
\H(M,\es, k) \ \coloneqq \ \sum_{\substack{S \subseteq X\\|S|=k}} \, \, \, \sum_{\es'\in \asm(\es,S)} M(\es').
\end{align}
We first explain
how computing~$\shap(\cdot,\cdot,\cdot)$ can be reduced in polynomial
time to the problem of computing~$\H(\cdot,\cdot,\cdot)$, and then
how computing~$\H(\cdot,\cdot,\cdot)$ can be reduced in polynomial
time to computing~$\ssat(\cdot,\cdot,\cdot)$.

\medskip

{\bf Reducing from~$\shap(\cdot,\cdot,\cdot)$
to~$\H(\cdot,\cdot,\cdot)$.}  We need to compute~$\shap(C,\es,x)$, for
a given deterministic and decomposable circuit~$C$ over a set of
variables~$X$, entity~$\es\in \eset(X)$, and feature~$x\in X$. Let~$n = |X|$, and define
\[\diff_k(C,\es,x) \ \coloneqq \ \sum_{\substack{S\subseteq X\setminus \{x\}\\|S|=k}} (\phi(C,\es,S\cup \{x\}) - \phi(C,\es,S)).\]
Then by the definition of the~$\shap$-score
in \eqref{eq:shapscoredef}, we have:
\begin{align*}
\shap(C,\es,x) \ = \
\sum_{k=0}^{n-1} \frac{k!(n-k-1)!}{n!} \diff_k(C,\es,x).
\end{align*}
Observe that all arithmetical terms (such as~$k!$ or~$n!$) can be computed in polynomial time: this is simply because~$n$ is given in unary, as it is bounded by the
size of the circuit.
Therefore, it is enough to show how to compute in polynomial time the
quantities~$\diff_k(C,\es,x)$ for each~$k\in \{0,\ldots,n-1\}$, as~$n = |X|$ is bounded by the size of the input~$(C,\es,x)$.
By definition
of~$\phi(\cdot,\cdot,\cdot)$, we have that~$\diff_k(C,\es,x) = \alpha - \beta$, where:
\begin{eqnarray*}
\alpha & = & \sum_{\substack{S\subseteq X\setminus \{x\}\\|S|=k}} \frac{1}{2^{n-(k+1)}} \sum_{\es' \in \asm(\es,S\cup \{x\})} C(\es')\\
\beta & = & \sum_{\substack{S\subseteq X\setminus \{x\}\\|S|=k}} \frac{1}{2^{n-k}} \sum_{\es' \in \asm(\es,S)} C(\es').
\end{eqnarray*}
Next we show how the computation of~$\alpha$ and~$\beta$ can be
reduced in polynomial-time to the computation of
$\H(\cdot,\cdot,\cdot)$. For an entity~$\es \in \eset(X)$
and~$S\subseteq X$, let~$\es_{|S}$ be the entity over~$S$ that is
obtained by restricting~$\es$ to the domain~$S$ (that
is, formally~$\es_{|S} \in \eset(S)$ and~$\es_{|S}(y) \coloneqq \es(y)$ for
every~$y \in S$). Then, starting with~$\beta$, we have that:
\begin{align*}
\beta \ =& \ \sum_{\substack{S\subseteq X\setminus \{x\}\\|S|=k}} \frac{1}{2^{n-k}} \sum_{\es' \in \asm(\es,S)} C(\es')\\
=& \bigg[\sum_{\substack{S\subseteq X\setminus \{x\}\\|S|=k}} \frac{1}{2^{n-k}} \sum_{\substack{\es' \in \asm(\es,S)\\\es'(x)=1}} C(\es')\bigg]\\
& \hspace{25pt} + \bigg[\sum_{\substack{S\subseteq X\setminus \{x\}\\|S|=k}} \frac{1}{2^{n-k}} \sum_{\substack{\es' \in \asm(\es,S)\\\es'(x)=0}} C(\es')\bigg]\\
=& \bigg[ \frac{1}{2^{n-k}}\sum_{\substack{S\subseteq X\setminus \{x\}\\|S|=k}} \sum_{\substack{\es'' \in \asm(\es_{|X\setminus\{x\}},S)}} C_{+x}(\es'')\bigg]\\
& \hspace{25pt} + \bigg[\frac{1}{2^{n-k}} \sum_{\substack{S\subseteq X\setminus \{x\}\\|S|=k}} \sum_{\substack{\es'' \in \asm(\es_{|X\setminus\{x\}},S)}} C_{-x}(\es'')\bigg]\\
=& \frac{1}{2^{n-k}} \bigg(\H(C_{+x},\es_{|X\setminus\{x\}},k) +  \H(C_{-x}, \es_{|X\setminus\{x\}},k)\bigg).
\end{align*}
The last equality is obtained 
by using the definition
of~$\H(\cdot,\cdot,\cdot)$. 
A similar analysis allows us to conclude that: 
\[
\alpha \ = \ \begin{cases}
 {\displaystyle \frac{1}{2^{n-(k+1)}} \H(C_{+x},\es_{|X\setminus\{x\}},k)}, & \text{if } \es(x)=1\\
 {\displaystyle \frac{1}{2^{n-(k+1)}} \H(C_{-x},\es_{|X\setminus\{x\}},k)}, & \text{if } \es(x)=0
 \end{cases}.
\]

Hence, if we can compute in polynomial
time~$\H(\cdot,\cdot,\cdot)$ for deterministic and decomposable Boolean
circuits, then we can compute~$\alpha$ and~$\beta$ in polynomial time (because~$C_{+x}$
and~$C_{-x}$ can be computed in linear time from~$C$, and they
are deterministic and decomposable as well).
Thus, we can compute~$\diff_k(C,\es,x)$ in polynomial time for
each~$k\in \{0,\ldots,n-1\}$ and, hence,~$\shap(C,\es,x)$ as well.
In conclusion,~$\shap(C,\es,x)$ can be computed in
polynomial time if there is a polynomial-time algorithm to
compute~$\H(\cdot,\cdot,\cdot)$ for deterministic and decomposable
Boolean circuits.

\medskip

{\bf Reducing from~$\H(\cdot,\cdot,\cdot)$
to~$\ssat(\cdot,\cdot,\cdot)$.}  We now show that
computing~$\H(\cdot,\cdot,\cdot)$ can be reduced in polynomial time to
computing~$\ssat(\cdot,\cdot,\cdot)$.  Given as input a deterministic
and decomposable circuit~$C$ over a set of variables~$X$, an
entity~$\es\in \eset(X)$, and an integer~$k \leq |X|$, recall the definition of~$\H(C,\es,x)$ in \eqref{eq:def-H}.
Then consider an entity~$\es'' \in \eset(X)$ and reason about
how many times~$\es''$ will occur as a summand in the
expression \eqref{eq:def-H}.  First of all, it is clear that
if~$|\sims(\es,\es'')| < k$, then~$\es''$ will not appear in the sum;
this is because if~$\es' \in \asm(\es,S)$ for some~$S \subseteq X$
such that~$|S| = k$, then~$S \subseteq
\sims(\es,\es')$ and, thus,~$k \leq |\sims(\es,\es')|$.
Now, how many times does an entity~$\es''\in \eset(X)$ such
that~$|\sims(\es,\es'')| \geq k$ occur as a summand in the expression?
The answer is simple: once per~$S\subseteq \sims(\es,\es'')$ of
size~$k$.  Since there are~$\binom{|\sims(\es,\es'')|}{k}$ such sets~$S$, we
obtain that~$\H(C,\es,k)$ is equal to 
\begin{align*}
&\sum_{\substack{\es''\in \eset(X)\\ |\sims(\es,\es'')|\geq k}} \binom{|\sims(\es,\es'')|}{k} \cdot C(\es'')\\
&= \ \sum_{\substack{\es''\in \sat(C)\\ |\sims(\es,\es'')|\geq k}} \binom{|\sims(\es,\es'')|}{k}\\
&= \ \sum_{\ell=k}^n  \sum_{\substack{\es''\in \sat(C)\\ |\sims(\es,\es'')| = \ell}} \binom{|\sims(\es,\es'')|}{k}\\
&= \ \sum_{\ell=k}^n  \binom{\ell}{k}\sum_{\substack{\es''\in \sat(C)\\ |\sims(\es,\es'')|= \ell}} 1 \ \ = \ \ \sum_{\ell=k}^n \binom{\ell}{k} \cdot \ssat(C,\es,\ell),
\end{align*}
with the last equality being obtained by using the definition of
$\ssat(\cdot,\cdot,\cdot)$.  This concludes the reduction of this
section and, hence, the first part of the proof.

\subsection{Computing~$\ssat(\cdot,\cdot,\cdot)$ in polynomial time}
\label{subsec:ssatle}

We now take care of the second part of the proof of
Theorem~\ref{thm:shapscore-d-Ds}, i.e., proving that
computing~$\ssat(\cdot,\cdot,\cdot)$ for deterministic and
decomposable Boolean circuits can be done in polynomial time.
To do this, given a deterministic and
decomposable Boolean circuit~$C$, we first perform
two preprocessing steps on~$C$, which will simplify
the proof.

\begin{itemize} 
\item 
{\bf Rewriting to fan-in at most 2.} First, we modify the circuit~$C$
so that the fan-in of every~$\lor$- and~$\land$-gate is at
most~$2$. This can simply be done in linear time by rewriting
every~$\land$-gate (resp., and~$\lor$-gate) of fan-in~$m > 2$ with a
chain of~$m-1$ $\land$-gates (resp.,~$\lor$-gates) of fan-in~$2$. It
is clear that the resulting Boolean circuit is deterministic and
decomposable. Hence, from now on we assume that the fan-in of
every~$\lor$- and~$\land$-gate of~$C$ is at most~$2$.

\item 
{\bf Smoothing the circuit.} A deterministic and decomposable circuit~$C$ is
\emph{smooth}~\cite{darwiche2001tractability,shih2019smoothing} if for
every~$\lor$-gate~$g$ and input gates~$g_1,g_2$ of~$g$, we have that~$\var(g_1)
= \var(g_2)$, and we call such an~$\lor$-gate smooth. A standard construction
allows to transform in polynomial time a deterministic and decomposable Boolean
circuit~$C$ into an equivalent smooth deterministic and decomposable Boolean
circuit, and where each gate has fan-in at most~2. Thus, from now on we also
assume that~$C$ is smooth.  We illustrate how the construction works in Example
\ref{ex:loooooong} .  Full details can be found in the supplementary material
(namely, in Section~\ref{subsec:H}, paragraph
\emph{Smoothing the circuit}). 
\end{itemize}

\medskip

We have all the ingredients to prove that~$\ssat(\cdot,\cdot,\cdot)$
can be computed in polynomial time. Let~$C$ be a deterministic and
decomposable Boolean circuit
over a set of variables~$X$,
$\es\in \eset(X)$, $\ell$ a natural number such that~$\ell \leq |X|$
and~$n = |X|$. For a gate~$g$ of~$C$, let~$R_g$ be the Boolean circuit
over~$\var(g)$ that is defined by considering the subgraph of~$C$
induced by the set of gates~$g'$ in~$C$ for which there exists a path
from~$g'$ to~$g$ in~$C$. Notice that~$R_g$ is a deterministic and
decomposable Boolean circuit with output gate~$g$.  Moreover, for a
gate~$g$ and natural number~$k \leq |\var(g)|$,
define~$\alpha_g^k \coloneqq \ssat(R_g,\es_{|\var(g)},k)$, which we
recall is the number of entities~$\es'\in \eset(\var(g))$ such
that~$\es'$ satisfies~$R_g$ and~$|\sims(\es_{|\var(g)},\es')|=k$. We
will show how to compute all the values~$\alpha_g^k$ for every
gate~$g$ of~$C$ and~$k\in \{0,\ldots,|\var(g)|\}$ in polynomial
time. This will conclude the proof since, for the output gate~$g_\out$
of~$C$, we have that~$\alpha_{g_\out}^\ell = \ssat(C,\es,\ell)$.  Next
we explain how to compute these values in a bottom-up manner.

\begin{description}
\item[Variable gate.]
$g$ is a variable gate with label~$y \in X$, so
that~$\var(g)=\{y\}$. Then~$\alpha^0_g = 1 - \es(y)$
and~$\alpha^1_g = \es(y)$.

\item[Constant gate.]
$g$ is a constant gate with label~$a\in \{0,1\}$.  Then~$\var(g)=\emptyset$ and~$\alpha_g^0 =
a$.\footnote{We recall the mathematical
convention that there is a unique function with the
empty domain and, hence, a unique entity
over~$\emptyset$.}

\item[$\lnot$-gate.]
$g$ is a~$\lnot$-gate with input gate~$g'$. Then~$\var(g)=\var(g')$,
and the values~$\alpha_{g'}^k$ for~$k\in\{0,\ldots,|\var(g)|\}$
have already been computed. Fix~$k \in\{0,\ldots,|\var(g)|\}$. Since~$\binom{|\var(g)|}{k}$ is equal to the number of entities~$\es' \in \eset(\var(g))$ such
that~$|\sims(\es_{|\var(g)},\es')|=k$, we have
that
\begin{align*}
\alpha_g^k \ = \ \binom{|\var(g)|}{k}
-\alpha_{g'}^k.
\end{align*}
Therefore, given that~$\binom{|\var(g)|}{k}$
can be computed in polynomial time since $k \leq |\var(g)| \leq n =
|X|$, we have an efficient way to compute~$\alpha_g^k$.

\item[$\lor$-gate.]
$g$ is an~$\lor$-gate. By assumption,~$g$ is
deterministic, smooth and has fan-in at most 2.  If~$g$ has only one
input~$g'$, then clearly~$\var(g)=\var(g')$ and~$\alpha_g^k
= \alpha_{g'}^k$ for every~$k\in \{0,\ldots,|\var(g)|\}$. Thus, assume
that~$g$ has exactly two input gates~$g_1$ and~$g_2$, and recall
that~$\var(g_1) = \var(g_2) = \var(g)$, because~$g$ is
smooth. Also, recall that~$\alpha_{g_1}^k$
and~$\alpha_{g_2}^k$, for each~$k\in\{0,\ldots,|\var(g)|\}$, have
already been computed. Fix~$k\in\{0,\ldots,|\var(g)|\}$. Given
that~$g$ is deterministic and smooth, we have that
$\sat(R_g) \ = \ \sat(R_{g_1})  \cup \sat(R_{g_2})$,
where~$\sat(R_{g_1}) \cap \sat(R_{g_2}) = \emptyset$. By intersecting
these three sets with the set $\{ \es' \in \var(g) \mid
|\sims(\es_{|\var(g)},\es')|=k\}$, we obtain
that $\sat(R_g,\es_{|\var(g)},k) = 
\sat(R_{g_1},\es_{|\var(g)},k) \cup \sat(R_{g_2},\es_{|\var(g)},k)$,
where
$\sat(R_{g_1},\es_{|\var(g)},k) \cap \sat(R_{g_2},\es_{|\var(g)},k)
= \emptyset$. Hence:
\begin{multline*}
\ssat(R_g,\es_{|\var(g)},k) \ = \\
\ssat(R_{g_1},\es_{|\var(g)},k)  + \ssat(R_{g_2},\es_{|\var(g)},k),
\end{multline*}
or, in other words, we have that 
$\alpha_g^k = \alpha_{g_1}^k + \alpha_{g_2}^k$. Hence, we have an
efficient way to compute~$\alpha_g^k$.
\item[$\land$-gate.]
$g$ is an~$\land$-gate. By assumption, recall that~$g$ is decomposable
and has fan-in at most 2.  If~$g$ has only one input~$g'$, then
clearly~$\var(g)=\var(g')$ and~$\alpha_g^k = \alpha_{g'}^k$ for
every~$k\in \{0,\ldots,|\var(g)|\}$. Thus, assume that~$g$ has exactly
two input gates~$g_1$ and~$g_2$. Recall then that the
values~$\alpha_{g_1}^i$ and~$\alpha_{g_2}^j$, for each
$i\in\{0,\ldots,|\var(g_1)|\}$ and $j\in \{0,\ldots,|\var(g_2)|\}$,
have already been computed. Fix~$k \in \{0,\ldots,|\var(g)|\}$. Given
that~$g$ is a decomposable~$\land$-gate, in this case it is possible
to prove that:
\begin{align}
\label{eq:and-gate}
\alpha_g^k \ = \ \sum_{\substack{i \in \{0, \ldots, |\var(g_1)|\} \\ j \in \{0, \ldots, |\var(g_2)|\}
\\ i+j=k}} 
\alpha_{g_1}^i \cdot \alpha_{g_2}^j.
\end{align}
The complete proof of this property can be found in Appendix~\ref{sec:proof:thm:shapscore-d-Ds}. Therefore, as in the previous cases, we conclude that there is an efficient way to compute~$\alpha_g^k$.
\end{description}
This concludes the proof that~$\ssat(\cdot,\cdot,\cdot)$
can be computed in polynomial time for deterministic and decomposable Boolean circuits and, hence, the proof of
Theorem~\ref{thm:shapscore-d-Ds}.

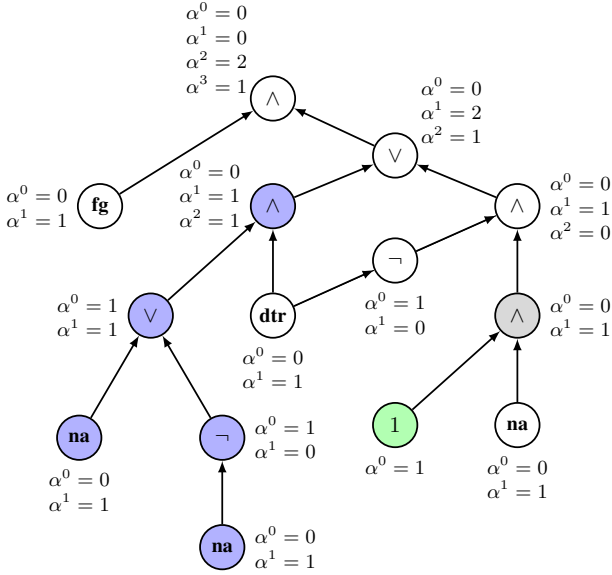
\begin{figure}
\begin{center}
\begin{center}
\resizebox{\columnwidth}{!}{
\begin{tikzpicture}
  \node[circ, minimum size=7mm, inner sep=-2] (n1) {\feat{dtr}};
  \node[sqw, below=0mm of n1, text width=10mm] (vn1) {{\footnotesize $\alpha^0 = 0$ $\alpha^1 = 1$}};
  \node[circ, above=10mm of n1, minimum size=7mm, inner sep=-2, fill=blue!30] (n1a) {$\land$}
  edge[arrin] (n1);
    \node[sqw, left=0mm of n1a, text width=10mm] (vn1a) {{\footnotesize $\alpha^0 = 0$ $\alpha^1 = 1$ $\alpha^2 = 1$ \phantom{$\alpha^2=1$}}};
  \node[circ, left=12mm of n1, minimum size=7mm, inner sep=-2, fill=blue!30] (n1o) {$\lor$}
  edge[arrout] (n1a);
    \node[sqw, left=0mm of n1o, text width=10mm] (vn1o) {{\footnotesize $\alpha^0 = 1$ $\alpha^1 = 1$}};
  \node[circw, below=12mm of n1o, minimum size=7mm, inner sep=-2] (n1oa) {};
  \node[circ, left=4mm of n1oa, minimum size=7mm, inner sep=-2, fill=blue!30] (n1ol) {\feat{na}}
  edge[arrout] (n1o);
  \node[sqw, below=0mm of n1ol, text width=10mm] (vn1ol) {{\footnotesize $\alpha^0 = 0$ $\alpha^1 = 1$}};
  \node[circ, right=4mm of n1oa, minimum size=7mm, inner sep=-2, fill=blue!30] (n1or) {$\neg$}
  edge[arrout] (n1o);
  \node[sqw, right=0mm of n1or, text width=10mm] (vn1or) {{\footnotesize $\alpha^0 = 1$ $\alpha^1 = 0$}};
  \node[circ, below=10mm of n1or, minimum size=7mm, inner sep=-2, fill=blue!30] (n1oru) {\feat{na}}
  edge[arrout] (n1or);
    \node[sqw, right=0mm of n1oru, text width=10mm] (vn1oru) {{\footnotesize $\alpha^0 = 0$ $\alpha^1 = 1$}};
  \node[circ, left=20mm of n1a, minimum size=7mm, inner sep=-2] (n0) {\feat{fg}};
  \node[sqw, left=0mm of n0, text width=10mm] (vn0) {{\footnotesize $\alpha^0 = 0$ $\alpha^1 = 1$}};
  \node[circw, right=12mm of n1, minimum size=7mm] (n2) {};
  \node[circ, right=12mm of n2, minimum size=7mm, fill=gray!30] (n3) {$\land$};
  \node[sqw, right=0mm of n3, text width=10mm] (vn3) {{\footnotesize $\alpha^0 = 0$ $\alpha^1 = 1$}};
  \node[circ, below=10mm of n3, minimum size=7mm] (n3r) {\feat{na}}
  edge[arrout] (n3);
  \node[sqw, below=0mm of n3r, text width=10mm] (vn3r) {{\footnotesize $\alpha^0 = 0$ $\alpha^1 = 1$}};
  \node[circ, left=12mm of n3r, minimum size=7mm, fill=green!30] (n3l) {$1$}
  edge[arrout] (n3);
    \node[sqw, below=0mm of n3l] (v3l) {{\footnotesize $\alpha^0 = 1$}};
  \node[circ, above=10mm of n3, minimum size=7mm] (n4) {$\land$}
  edge[-] node[fill=white, circw, minimum size=7mm] (en4) {$\neg$} (n1)
  edge[arrin] (n3);
  \node[circ, above=18.7mm of n3l, minimum size=7mm] (nnegaux) {}
    edge[arrin] (n1)
    edge[arrout] (n4);
  \node[sqw, below=0mm of en4, text width=10mm] (ven4) {{\footnotesize $\alpha^0 = 1$ $\alpha^1 = 0$}};
  \node[sqw, right=0mm of n4, text width=10mm] (vn4) {{\footnotesize $\alpha^0 = 0$ $\alpha^1 = 1$ $\alpha^2 = 0$}};

  \node[circ, above=18mm of n2, minimum size=7mm] (n5) {$\lor$}
  edge[arrin] (n4)
  edge[arrin] (n1a);
  \node[sqw, above right = -2mm and 0mm of n5, text width=10mm] (vn5) {{\footnotesize $\alpha^0 = 0$ $\alpha^1 = 2$ $\alpha^2 = 1$ }};
  \node[circ, above=27mm of n1, minimum size=7mm] (n6) {$\land$}
  edge[arrin] (n0) edge[arrin] (n5);
  \node[sqw, above left = -3mm and 0mm of n6, text width=10mm] (vn6) {{\footnotesize $\alpha^0 = 0$ $\alpha^1 = 0$ $\alpha^2 = 2$ $\alpha^3=1$}};
\end{tikzpicture} 
} 
\end{center}
\caption{
Execution of our algorithm
to compute
$\ssat(\cdot$, $\cdot, \cdot)$ over the Boolean circuit~$C_{+\feat{nf}}$ from Example~\ref{ex:loooooong}. \label{fig:ddbc-sat-alg-exa}}
\end{center}
\end{figure}

\medskip

\begin{example}
\label{ex:loooooong} 
{\em 
We illustrate 
how the
algorithm 
for computing the $\shap$-score
operates on the Boolean circuit~$C$ given
in Example \ref{ex:class}. Recall that~$C$ is defined over 
$X = \{\feat{fg}, \feat{dtr}, \feat{nf}, \feat{na}\}$, and assume we
want to compute~$\shap(C, \es, \feat{nf})$ for the entity~$\es$ with 
$\es(x) = 1$ for each~$x \in X$. By the polynomial time
reductions shown in Section \ref{subsec:shap-to-ssat}, to
compute~$\shap(C, \es, \feat{nf})$ it suffices to compute
$\H(C_{-\feat{nf}}, \es_{|X\setminus\{\feat{nf}\}}, \ell)$ and
$\H(C_{+\feat{nf}}, \es_{|X\setminus\{\feat{nf}\}}, \ell)$ for each
$\ell \in \{0, \ldots, 3\}$, which in turn reduces to the computation
of $\ssat(C_{-\feat{nf}}, \es_{|X\setminus\{\feat{nf}\}}, \ell)$ and
$\ssat(C_{+\feat{nf}}, \es_{|X\setminus\{\feat{nf}\}}, \ell)$ for each
$\ell \in \{0, \ldots, 3\}$. In what follows, we show how to compute
the values
$\ssat(C_{+\feat{nf}}, \es_{|X\setminus\{\feat{nf}\}}, \ell)$.

For the sake of presentation, let~$D :=C_{+\feat{nf}}$
and~$\es^\star
= \es_{|X\setminus\{\feat{nf}\}}$, so that we need to compute
$\ssat(D, \es^\star, \ell)$ for each~$\ell \in \{0, \ldots, 3\}$.
Notice that the values to be computed 
are
$\ssat(D, \es^\star, 0) = 0$, $\ssat(D, \es^\star, 1) = 0$, $\ssat(D, \es^\star, 2) =
2$ and $\ssat(D, \es^\star, 3) = 1$.
To compute~$\ssat(D, \es^\star, \ell)$, we first need to replace
feature~$\feat{nf}$ by constant~$1$ in~$C$ to generate
$D = C_{+\feat{nf}}$, and then we need to transform~$D$ into
a Boolean circuit that is smooth and where each gate has fan-in at
most~2. The result of this process is shown in
Figure \ref{fig:ddbc-sat-alg-exa}, where the green node is added when
replacing feature~$\feat{nf}$ by constant~$1$, the gray node is added
to satisfy the restriction that each gate has fan-in at most 2, and
the blue nodes are added to have a smooth Boolean circuit.

The algorithm to compute~$\ssat(D, \es^\star, \ell)$ runs in a bottom-up fashion on 
the Boolean circuit, computing for each gate~$g$
the values~$\alpha^k_g$ for~$k \in \{0, \ldots, |\var(g)|\}$.
We show these values next to each node in
Figure \ref{fig:ddbc-sat-alg-exa}, but omitting
gate subscripts. For instance, for a variable gate~$g$ with label
$\feat{na}$, we have that $\var(g) = \{\feat{na}\}$, $\alpha^0_g = 0$
and $\alpha^1_g = 1$, given that $\es^\star_{|\var(g)}(\feat{na})
= \es^\star(\feat{na}) = 1$. Notice that for the output gate~$g_\out$ of the Boolean circuit, 
which is its top gate, we have that $\ssat(D, \es^\star, \ell)
= \alpha^\ell_{g_\out}$ for each~$\ell \in \{0, \ldots, 3\}$, which
were the values to be computed. \qed}
\end{example}

\section{Limits on the Tractable Computation of the $\shap$-Score}
\label{sec:limits}
We have shown that the $\shap$-score can be computed in polynomial time for deterministic and decomposable circuits. 
A natural question, then, is whether both determinism and decomposability are necessary for this positive result to hold. 
In this section we show this to be case, at least under standard complexity assumptions. 
Recall that $\shp$ consists of the class of functions that can be defined by counting 
the number of accepting paths of a non-deterministic Turing machine that works in polynomial time. The notion of hardness for 
the class $\shp$ is defined in terms of polynomial time {\em Turing reductions}. Under widely-held complexity assumptions, $\shp$-hard problems 
cannot be solved in polynomial time~\cite{AB09}.  We can then prove the following: 

\begin{theorem}
\label{thm:shapscore-limits}
The following problems are~$\shp$-hard.  
\begin{enumerate} 
\item Given as input a decomposable Boolean circuit~$C$
over a set of features~$X$, an entity~$\es:X\to \{0,1\}$, and a feature~$x\in
X$, compute the value~$\shap(C,\es,x)$.
\item 
Given as input a deterministic Boolean circuit~$C$
over a set of features~$X$, an entity~$\es:X\to \{0,1\}$, and a feature~$x\in
X$, compute the value~$\shap(C,\es,x)$.
\end{enumerate} 
\end{theorem}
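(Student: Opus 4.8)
The plan is to first isolate a general reduction and then instantiate it for each of the two circuit classes. The general statement I would prove is: for any class~$\mathcal{C}$ of Boolean classifiers on which model evaluation is polynomial-time feasible---in particular, for any class of Boolean circuits, since $C(\es)$ is computed by a bottom-up pass---the model counting problem for $\mathcal{C}$ (given $C\in\mathcal{C}$ over features $X$, output $\ssat(C)$) admits a polynomial-time Turing reduction to the problem of computing $\shap(\cdot,\cdot,\cdot)$ over $\mathcal{C}$. This is the ``mild condition'' alluded to in the introduction; note that no closure assumption on $\mathcal{C}$ is required. Granting this, Theorem~\ref{thm:shapscore-limits} follows by exhibiting, for each of the two classes in the statement, a sub-family of circuits on which model counting is already $\shp$-hard.

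For the general reduction I would use the efficiency (completeness) property of the Shapley value. Fix $C\in\mathcal{C}$ over $X$ with $n=|X|$ and an arbitrary entity $\es$ (say the all-zero one). The set function $v(S):=\phi(C,\es,S)$ is a cooperative game on $X$ whose Shapley value of player $x$ is exactly $\shap(C,\es,x)$. Since formula~\eqref{eq:shapscoredef} involves only the differences $v(S\cup\{x\})-v(S)$, the Shapley values of $v$ and of $v-v(\emptyset)$ coincide, so efficiency yields $\sum_{x\in X}\shap(C,\es,x)=v(X)-v(\emptyset)$. Now $\asm(\es,X)=\{\es\}$, hence $v(X)=C(\es)$, whereas $v(\emptyset)=\phi(C,\es,\emptyset)=\ssat(C)/2^{n}$. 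Rearranging,
\[
\ssat(C)\;=\;2^{n}\Bigl(C(\es)-\sum_{x\in X}\shap(C,\es,x)\Bigr),
\]
and the right-hand side is computed with one evaluation of $C$ on $\es$ and $n$ oracle calls for $\shap(\cdot,\cdot,\cdot)$, each on the very circuit $C$ and thus inside $\mathcal{C}$. (Efficiency of the Shapley value is standard; alternatively one checks $\sum_{x}\shap(C,\es,x)=C(\es)-\phi(C,\es,\emptyset)$ directly, by collecting in~\eqref{eq:shapscoredef} the coefficient of each $\phi(C,\es,T)$: for $\emptyset\neq T\neq X$ the positive and negative contributions cancel, leaving exactly $\phi(C,\es,X)-\phi(C,\es,\emptyset)$.)

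It then remains to show that model counting is $\shp$-hard for decomposable circuits and for deterministic circuits. For part~(1): every DNF formula $\bigvee_j t_j$, read as a Boolean circuit, is decomposable, since the inputs of each $\land$-gate are literals over pairwise distinct variables (so their variable sets are disjoint) and no constraint is placed on the top $\lor$-gate. As counting the satisfying assignments of a DNF formula is $\shp$-hard (for instance by complementation, since it reduces to model counting for CNF formulas, which is $\shp$-hard already for monotone $2$-CNF), model counting for decomposable circuits is $\shp$-hard, and the general reduction gives part~(1).

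For part~(2): given a CNF formula $\varphi=\bigwedge_{i=1}^{m}C_i$ with $C_i=\ell_{i,1}\lor\cdots\lor\ell_{i,k_i}$, I would rewrite each clause as the equivalent ``decision-list'' disjunction
\[
D_i\;:=\;\bigvee_{j=1}^{k_i}\Bigl(\ell_{i,j}\,\land\bigwedge_{j'<j}\lnot\ell_{i,j'}\Bigr),
\]
and take $\bigwedge_{i=1}^{m}D_i$ as a circuit for $\varphi$. Any two disjuncts of $D_i$ are mutually exclusive---the $j$-th forces $\ell_{i,j}$ true and every later one forces it false---so each of these $\lor$-gates is deterministic; the only other gates introduced are $\land$-gates and $\lnot$-gates, which carry no determinism requirement, so the resulting polynomial-size circuit is deterministic. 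Hence model counting for deterministic circuits is at least as hard as model counting for CNF formulas, i.e.\ $\shp$-hard, and the general reduction gives part~(2). The step I expect to require the most care is precisely this one: a CNF read verbatim is \emph{not} deterministic (a clause's $\lor$-gate violates determinism), so one must convert it into a polynomial-size deterministic circuit, and the per-clause decision-list rewriting is the device that achieves this; part~(1) and the general reduction are comparatively routine.
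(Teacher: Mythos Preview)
Your proposal is correct and follows the paper's approach closely: the general reduction via the efficiency identity $\ssat(C)=2^{|X|}\bigl(C(\es)-\sum_{x\in X}\shap(C,\es,x)\bigr)$ is exactly Lemma~\ref{lem:limits}, and your treatment of part~(1) via DNF is identical to the paper's.

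The only genuine difference is in part~(2). The paper restricts to $3$-CNF and, for each clause $(\ell_1\lor\ell_2\lor\ell_3)$, replaces it by the disjunction of the seven mutually exclusive conjunctions over $\{\ell_1,\ell_2,\ell_3\}$ that satisfy the clause (i.e., it enumerates the truth-table rows). You instead take arbitrary CNF and rewrite each clause as the decision-list disjunction $\bigvee_{j}\bigl(\ell_{i,j}\land\bigwedge_{j'<j}\lnot\ell_{i,j'}\bigr)$. Both constructions yield polynomial-size deterministic circuits; yours is more economical (size $O(k^2)$ per clause of width~$k$ rather than $O(k\cdot 2^k)$, which is why the paper needs bounded clause width) and works for unrestricted CNF. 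Conversely, the paper's rewriting has the minor advantage that every disjunct mentions the same variable set, so the resulting $\lor$-gates are already smooth. Either device suffices for the theorem.
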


To prove Theorem \ref{thm:shapscore-limits}, we start by showing
that there is a polynomial-time reduction from the 
problem of computing the number of entities that satisfy $M$, for $M$ an arbitrary Boolean classifier, 
to the problem of computing the $\shap$-score over $M$. This holds under the mild condition that $M(\es)$ can be computed in polynomial time for an input entity $\es$, which is satisfied for all the Boolean
circuits and binary decision diagrams classes considered in this
paper.
The proof of this result follows from well-known properties of Shapley values. (A closely related result can be found as
Theorem~5.1 in~\cite{BLSSV20}). 

\begin{lemma}
\label{lem:limits}
Let~$M$ be a Boolean classifier over a set of features~$X$. Then for every~$\es \in \eset(X)$ we have:
\begin{align*}
\ssat(M) \ = \ 2^{|X|} \bigg( M(\es) - \sum_{x\in X} \shap(M,\es,x) \bigg).
\end{align*}
\end{lemma}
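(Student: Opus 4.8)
The plan is to recognize Lemma~\ref{lem:limits} as essentially the \emph{efficiency} axiom of the Shapley value, instantiated for the cooperative game whose characteristic function is the map $S \mapsto \phi(M,\es,S)$, followed by two elementary evaluations of $\phi$. Concretely, I will first establish the identity
$\sum_{x\in X} \shap(M,\es,x) = \phi(M,\es,X) - \phi(M,\es,\emptyset)$, and then compute the two extreme terms $\phi(M,\es,X)$ and $\phi(M,\es,\emptyset)$ explicitly.

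First I would prove the efficiency identity directly, by exchanging the order of summation in the definition~\eqref{eq:shapscoredef}. Writing $n = |X|$, I would fix an arbitrary $T \subseteq X$ and determine the coefficient with which $\phi(M,\es,T)$ occurs in $\sum_{x\in X}\shap(M,\es,x)$. The term $\phi(M,\es,T)$ appears with a positive sign once for each $x\in T$ (namely when $S = T\setminus\{x\}$), contributing in total $|T|\cdot\frac{(|T|-1)!\,(n-|T|)!}{n!} = \frac{|T|!\,(n-|T|)!}{n!}$; and it appears with a negative sign once for each $x\in X\setminus T$ (namely when $S = T$), contributing in total $-(n-|T|)\cdot\frac{|T|!\,(n-|T|-1)!}{n!} = -\frac{|T|!\,(n-|T|)!}{n!}$. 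For every $T$ with $0 < |T| < n$ these two contributions cancel exactly; for $T = X$ only the positive part survives, with coefficient $1$; and for $T = \emptyset$ only the negative part survives, with coefficient $-1$. This yields $\sum_{x\in X}\shap(M,\es,x) = \phi(M,\es,X) - \phi(M,\es,\emptyset)$. (Alternatively, one may simply cite the efficiency axiom of the Shapley value~\cite{roth1988shapley,shapley1953value} and proceed directly.)

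Next I would evaluate the two boundary terms from the definition of $\phi$. Since $\asm(\es,X) = \{\es\}$, conditioning on agreement with $\es$ on all of $X$ leaves only $\es$ itself, so $\phi(M,\es,X) = M(\es)$. Since $\asm(\es,\emptyset) = \eset(X)$, the uniform-distribution formula for $\phi$ gives $\phi(M,\es,\emptyset) = \frac{1}{2^{n}}\sum_{\es'\in\eset(X)} M(\es') = \frac{\ssat(M)}{2^{n}}$. Substituting, $\sum_{x\in X}\shap(M,\es,x) = M(\es) - \frac{\ssat(M)}{2^{n}}$, and solving for $\ssat(M)$ gives $\ssat(M) = 2^{|X|}\big(M(\es) - \sum_{x\in X}\shap(M,\es,x)\big)$, as claimed; note the right-hand side is therefore independent of the choice of $\es$.

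The only step that requires a little care is the double-counting argument for the efficiency identity, and even that is routine factorial bookkeeping; the two evaluations of $\phi$ are immediate. So I do not expect a genuine obstacle here — the lemma is the efficiency axiom plus two one-line computations.
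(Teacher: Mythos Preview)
Your proposal is correct and follows the same overall structure as the paper's proof: establish the efficiency identity $\sum_{x\in X}\shap(M,\es,x)=\phi(M,\es,X)-\phi(M,\es,\emptyset)$, then evaluate the two endpoints. The only difference is in how you prove efficiency. The paper rewrites $\shap(M,\es,x)$ as an average over permutations, $\frac{1}{n!}\sum_{\pi}\big(\phi(M,\es,S_\pi^x\cup\{x\})-\phi(M,\es,S_\pi^x)\big)$, swaps the sums over $x$ and $\pi$, and observes that for each fixed $\pi$ the inner sum telescopes to $\phi(M,\es,X)-\phi(M,\es,\emptyset)$. You instead stay with the subset formulation~\eqref{eq:shapscoredef} and do a direct coefficient-extraction argument, showing that the net coefficient of $\phi(M,\es,T)$ vanishes for all proper nonempty $T$. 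Both are standard proofs of the efficiency axiom; yours has the minor advantage of working straight from the definition actually stated in the paper, while the paper's telescoping argument is perhaps conceptually cleaner once the permutation form is in hand. The endpoint evaluations and the final rearrangement are identical.
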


We prove~Lemma~\ref{lem:limits} in the supplementary material.
Item~(1) in Theorem \ref{thm:shapscore-limits} follows then by the following two facts: (a) Counting the number of entities that satisfy 
a DNF formula is a~$\shp$-hard problem~\cite{provan1983complexity}, and~(b) 
DNF formulae are particular kinds of decomposable Boolean circuits. Analogously, 
item~(2) in Theorem \ref{thm:shapscore-limits} can be obtained from the following two facts: 
(a) Counting the number of entities that satisfy 
a 3-CNF formula is a~$\shp$-hard problem, and
(b)
from every 3-CNF formula $\psi$, we can build 
in polynomial time an equivalent deterministic Boolean circuit $C_\psi$. Details can be found in the supplementary material.

\section{Tractability for the Product Distribution}
\label{sec:prod}
In Section 
\ref{sec:prelim}, 
we introduce the uniform distribution, and used it so far as a basis for the $\shap$-score. 
Another probability space that is often considered on $\eset(X)$ is the \emph{product distribution},
defined as follows. Let~$\pr:X \to [0,1]$ be a function that associates to
every feature~$x \in X$ a value~$\pr(x) \in [0,1]$; intuitively, the probability that $x$ takes value $1$. Then, the
\emph{product distribution generated by~$\pr$} is the
probability distribution~$\prd$ over~$\eset(X)$ such that, for every~$\es\in
\eset(X)$, 
\[\prd(\es) \ \ \coloneqq \ \ \bigg(\prod_{\substack{x\in X\\ \es(x)=1}} \pr(x)\bigg) \cdot \bigg(\prod_{\substack{x\in X\\ \es(x)=0}} (1-\pr(x))\bigg).\]
That is, the product distribution that is determined by pre-specified marginal distributions, and that makes the features take values independently from each other.
Observe the effect of
the probability distribution on the $\shap$-score: intuitively, the higher the
probability of an entity, the more impact this entity will have on the
computation. This can be used, for instance, to avoid bias in the
explanations~\cite{lundberg2017unified,BLSSV20}.

Notice that the uniform space is a special case
of product space, with~$\prd$ invoking~$\pr(x) := \nicefrac{1}{2}$ for every~$x\in X$.
Thus, our hardness results from Theorem~\ref{thm:shapscore-limits} also hold in the case where the probabilities~$\pr(x)$ are given as input.
What is more interesting is the fact that our tractability result from Theorem~\ref{thm:shapscore-d-Ds} extends to product distributions. Formally:

\begin{theorem}
\label{thm:shapscore-d-Ds-prod}
The following problem can be solved in polynomial time.  Given as input a
deterministic and decomposable circuit~$C$ over a set of features~$X$, rational
probability values~$\pr(x)$ for every feature~$x\in X$, an entity~$\es:X\to
\{0,1\}$, and a feature~$x\in X$, compute the value~$\shap(C,\es,x)$ under the probability distribution~$\prd$.
\end{theorem}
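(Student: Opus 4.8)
The plan is to mirror the two-stage structure of the proof of Theorem~\ref{thm:shapscore-d-Ds}. In the first stage we reduce, by arithmetic manipulation only, the computation of $\shap(C,\es,x)$ under $\prd$ to the computation of a single per-circuit quantity for the conditioned circuits $C_{+x}$ and $C_{-x}$ (which are again deterministic and decomposable, over $X\setminus\{x\}$); in the second stage we compute that quantity in polynomial time by a bottom-up pass over the circuit. The rewriting to fan-in at most $2$ and the smoothing of Section~\ref{subsec:ssatle} concern only the shape of a circuit and are insensitive to the distribution, so we may apply them to $C_{+x}$ and $C_{-x}$ and assume those circuits are smooth with fan-in at most~$2$. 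Write $n\coloneqq|X|$, and for notational convenience set $q_y\coloneqq\pr(y)$ if $\es(y)=1$ and $q_y\coloneqq 1-\pr(y)$ if $\es(y)=0$; thus $q_y$ is the probability that feature $y$ takes the value it has in $\es$. We assume $0<\pr(y)<1$ for all $y$, which is exactly what guarantees that every conditional expectation appearing in Definition~\ref{def:Shapley} is well defined; the degenerate cases can be discussed separately.

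The point where the product case departs from the uniform one is this: if one repeats the formula manipulations of Section~\ref{subsec:shap-to-ssat} with the expectations taken under $\prd$, the normalizing factors $\tfrac{1}{2^{|X|-k}}$ there turn into the $S$-dependent quantities $\tfrac{1}{\prod_{y\in S} q_y}$, which are no longer determined by $|S|$ alone. To cope with this we replace the integer-valued per-gate invariant $\alpha^k_g=\ssat(R_g,\es_{|\var(g)},k)$ by a polynomial-valued one that records the probability mass and the agreement pattern of a satisfying assignment at once. For a gate $g$ of (the preprocessed) $C_{+x}$ or $C_{-x}$, let
\[
G_g(t)\ \coloneqq\ \sum_{\es'\in\sat(R_g)}\ \ \prod_{\substack{y\in\var(g)\\ \es'(y)=\es(y)}}\!\!\big(q_y+t\big)\ \cdot\!\! \prod_{\substack{y\in\var(g)\\ \es'(y)\neq\es(y)}}\!\!\big(1-q_y\big),
\]
a polynomial in $t$ of degree at most $|\var(g)|\le n$; when all $q_y=\tfrac12$ this is, up to rescaling, a generating function of the numbers $\H(R_g,\es_{|\var(g)},k)$ from Section~\ref{subsec:shap-to-ssat}.

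I would then establish the following bottom-up recurrences for $G_g$, the product-distribution analogues of the rules in Section~\ref{subsec:ssatle}. For a variable gate with label $y$: $G_g(t)=q_y+t$ if $\es(y)=1$, and $G_g(t)=1-q_y$ if $\es(y)=0$. For a constant gate, $G_g$ is $1$ or $0$. For a $\lnot$-gate with input $g'$: since $(q_y+t)+(1-q_y)=1+t$, summing over all assignments gives $G_g(t)=(1+t)^{|\var(g)|}-G_{g'}(t)$. For a deterministic, smooth $\lor$-gate with inputs $g_1,g_2$ (so $\var(g_1)=\var(g_2)=\var(g)$): $\sat(R_g)=\sat(R_{g_1})\sqcup\sat(R_{g_2})$, hence $G_g(t)=G_{g_1}(t)+G_{g_2}(t)$. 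For a decomposable $\land$-gate with inputs $g_1,g_2$ (so $\var(g)=\var(g_1)\sqcup\var(g_2)$): a satisfying assignment of $R_g$ splits uniquely into satisfying assignments of $R_{g_1}$ and $R_{g_2}$ over disjoint variable sets and the weight factors accordingly, so $G_g(t)=G_{g_1}(t)\cdot G_{g_2}(t)$. Each step is addition, multiplication, or subtraction from $(1+t)^m$ on polynomials of degree $\le n$ with rationals of polynomially bounded bit-size, so all $G_g$, and in particular $G_{C_{+x}}$ and $G_{C_{-x}}$, are computed in polynomial time. Finally, carrying out the manipulation of Section~\ref{subsec:shap-to-ssat} under $\prd$ --- splitting each term according to $\es'(x)$, writing $\phi(C,\es,S)$ as $\tfrac{1}{\prod_{y\in S}q_y}\sum_{\es'\in\sat(C),\,S\subseteq\sims(\es,\es')}\prd(\es')$, regrouping by $|S|$ so that the coefficients $\tfrac{|S|!(n-|S|-1)!}{n!}$ depend only on $|S|$, and using the elementary-symmetric-polynomial identity $\prod_{y\in T}(q_y+t)=\sum_s t^s\,e_{|T|-s}(\{q_y\}_{y\in T})$ --- one expresses $\shap(C,\es,x)$ as $(1-q_x)$ times a fixed, precomputable linear combination (with coefficients $\tfrac{s!(n-s-1)!}{n!}$) of the coefficients of $G_{C_{+x}}$ and $G_{C_{-x}}$. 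Since these coefficients are available in polynomial time, so is $\shap(C,\es,x)$.

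I expect the main obstacle to be this last step: pinning down the right polynomial invariant $G_g$ (so that the decomposable-$\land$ case becomes plain polynomial multiplication and the $\lnot$ case a clean complement by $(1+t)^m$), and then verifying the exact linear combination that recovers $\shap(C,\es,x)$ from the coefficients of $G_{C_{+x}}$ and $G_{C_{-x}}$, where the conditioning on $x$ and the stray factors of $q_x$ must be handled carefully. A secondary point is the degenerate case $\pr(y)\in\{0,1\}$: when $q_x=1$ the formula correctly returns $\shap(C,\es,x)=0$, whereas when some $q_y=0$ the value $\shap(C,\es,x)$ is itself ill-defined, so one either restricts to $\pr(y)\in(0,1)$ or fixes and verifies a convention.
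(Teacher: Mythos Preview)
Your approach is correct and essentially identical to the paper's: the coefficient of~$t^\ell$ in your polynomial~$G_g(t)$ is precisely the quantity~$\alpha_g^\ell \coloneqq \H_{\Pi_{p_{|\var(g)}}}(R_g,\es_{|\var(g)},\ell)$ that the paper computes bottom-up (your polynomial addition, multiplication, and complement-by-$(1+t)^m$ rules are exactly the paper's per-index addition, convolution, and complement-by-$\binom{m}{\ell}$ rules), so you are computing the same invariant in generating-function form. Once you notice this, the elementary-symmetric-polynomial detour is unnecessary and the final step is just the paper's reduction of~$\shap$ to~$\H$ on~$C_{+x}$ and~$C_{-x}$, yielding $\diff_k=(\es(x)-\pr(x))\big(\H(C_{+x},k)-\H(C_{-x},k)\big)$ as in Algorithm~\ref{algo:main}.
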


The proof of Theorem~\ref{thm:shapscore-d-Ds-prod} is more involved
than that of Theorem~\ref{thm:shapscore-d-Ds}, and is provided in the
supplementary material.
In particular, the main difficulty is that $\phi(M,\es,S)$ is no longer
equal to $\sum_{\es' \in \asm(\es,S)}
\frac{1}{2^{|X\setminus S|}} M(\es')$ (as it was the case for the uniform
space), because the entities do not all have the same
probability. This prevents us from being able to reduce to the
computation of~$\ssat(\cdot,\cdot,\cdot)$.  Instead, we use a
different definition of~$\H(\cdot,\cdot,\cdot)$, and prove that it can
directly be computed in a bottom-up fashion on the circuits.  We show
in Algorithm~\ref{algo:main} our algorithm to compute the SHAP score
for deterministic and decomposable Boolean circuits under product
distributions, which can be extracted from the proof in the
supplementary material.
Notice that by
using the techniques presented in Section \ref{sec:shapscore-d-Ds},
the first step of the algorithm transforms the input circuit $C$ into
an equivalent smooth circuit $D$ where each $\lor$-gate and
$\land$-gate has fan-in 2.

\begin{algorithm}
\caption{$\shap$-scores for deterministic and decomposable Boolean circuits}
\label{algo:main}
\SetKwBlock{Induction}{Compute values~$\gamma_g^{\ell}$ and~$\delta_g^{\ell}$ for every gate $g$ in $D$ and~$\ell\in \{0,\ldots,|\var(g) \setminus \{x\}|\}$ by bottom-up induction on~$D$ as follows:\label{line:begin}}{end\label{line:end}}
\SetKwInOut{Input}{Input}\SetKwInOut{Output}{Output}
\Input{Deterministic and decomposable Boolean circuit~$C$ over features~$X$ with ouput gate~$g_\out$, rational probability values~$\pr(x)$ for all~$x\in X$, entity~$\es\in \eset(X)$, and feature~$x\in X$.}
\Output{The value $\shap(C,\es,x)$ under the probability distribution~$\prd$.}
\BlankLine
\hrulealg
\BlankLine
\BlankLine
Transform $C$ into an equivalent smooth circuit $D$ where each $\lor$-gate and $\land$-gate has fan-in 2\;
\BlankLine
Compute the set~$\var(g)$ for every gate~$g$ in $D$\;
\BlankLine
\Induction{
    \uIf{$g$ is a constant gate with label~$a\in \{0,1\}$}{
        $\gamma_g^0,\, \delta_g^0 \leftarrow a$\;
    }
    \uElseIf{$g$ is a variable gate with~$\var(g)=\{x\}$}{
        $\gamma_g^0 \leftarrow 1$\;
        $\delta_g^0 \leftarrow 0$\;
    }
    \uElseIf{$g$ is a variable gate with~$\var(g)=\{y\}$ and $y\neq x$}{
        $\gamma_g^0,\, \delta_g^0 \leftarrow \pr(y)$\;
        $\gamma_g^1,\, \delta_g^1 \leftarrow \es(y)$\;
    }
    \uElseIf{$g$ is a $\lnot$-gate with input gate $g'$}{
        \For{$\ell\in \{0,\ldots,|\var(g)\setminus \{x\}|\}$}{
            $\gamma_g^{\ell} \leftarrow \binom{|\var(g) \setminus \{x\}|}{\ell} - \gamma_{g'}^{\ell}$\;
            $\delta_g^{\ell} \leftarrow \binom{|\var(g) \setminus \{x\}|}{\ell} - \delta_{g'}^{\ell}$\;
        }
    }

    \uElseIf{$g$ is an $\lor$-gate with input gates $g_1,g_2$\label{line:lor}}{
        \For{$\ell\in \{0,\ldots,|\var(g)\setminus \{x\}|\}$\label{line:for}}{
            $\gamma_g^{\ell} \leftarrow \gamma_{g_1}^{\ell} + \gamma_{g_2}^{\ell}$\;
            $\delta_g^{\ell} \leftarrow \delta_{g_1}^{\ell} + \delta_{g_2}^{\ell}$\;
        }\label{line:endlor}
    }
    \uElseIf{$g$ is an $\land$-gate with input gates $g_1,g_2$}{
        \For{$\ell\in \{0,\ldots,|\var(g)\setminus \{x\}|\}$}{
            $\gamma_g^{\ell} \leftarrow \sum_{\substack{\ell_1 \in \{0,\ldots,|\var(g_1)\setminus \{x\}|\}\\\ell_2 \in \{0,\ldots,|\var(g_2)\setminus \{x\}|\}\\ \ell_1 + \ell_2 = \ell}} \gamma_{g_1}^{\ell_1} \cdot \gamma_{g_2}^{\ell_2}$\;
            $\delta_g^{\ell} \leftarrow \sum_{\substack{\ell_1 \in \{0,\ldots,|\var(g_1)\setminus \{x\}|\}\\\ell_2 \in \{0,\ldots,|\var(g_2)\setminus \{x\}|\}\\ \ell_1 + \ell_2 = \ell}} \delta_{g_1}^{\ell_1} \cdot \delta_{g_2}^{\ell_2}$\;
        }
    }
}
\Return ${\displaystyle \sum_{k=0}^{|X|-1} \frac{k! \, (|X| - k - 1)!}{|X|!} \cdot \big[(\es(x) - \pr(x))(\gamma_{g_\out}^k - \delta_{g_\out}^k) \big]}$\label{line:return}\;
\end{algorithm}

\section{Extensions and Future Work} 
\label{sec:discussion}
We leave open many interesting directions for future work. For
instance, we intend to extend our algorithm for efficiently computing
the $\shap$-score to work with non-Boolean classifiers, and to consider
more general probability distributions that could better capture
possible correlations and dependencies between features.  We also aim
to provide an experimental comparison of 
our algorithm,  but specialized for decision trees, with the
one provided in~\cite[Alg.~2]{lundberg2020local}.
Last, we intend to
test our algorithm on real-world scenarios.

\section*{Acknowledgments}
We thank Nicholas Mc-Donnell for helping us to find an error in the
reduction from term $\alpha$ to function $H$ in
Section~\ref{subsec:shap-to-ssat}. Pablo Barceló was funded by
Fondecyt grant 1200967.

% References and End of Paper
% These lines must be placed at the end of your paper
\bibliography{main}

\begin{thebibliography}{26}
\providecommand{\natexlab}[1]{#1}
\providecommand{\url}[1]{\texttt{#1}}
\providecommand{\urlprefix}{URL }
\expandafter\ifx\csname urlstyle\endcsname\relax
  \providecommand{\doi}[1]{doi:\discretionary{}{}{}#1}\else
  \providecommand{\doi}{doi:\discretionary{}{}{}\begingroup
  \urlstyle{rm}\Url}\fi

\bibitem[{Amarilli et~al.(2020)Amarilli, Capelli, Monet, and
  Senellart}]{ACMS20}
Amarilli, A.; Capelli, F.; Monet, M.; and Senellart, P. 2020.
\newblock Connecting knowledge compilation classes and width parameters.
\newblock \emph{Theory Comput. Syst.} 64(5): 861--914.

\bibitem[{Arora and Barak(2009)}]{AB09}
Arora, S.; and Barak, B. 2009.
\newblock \emph{Computational Complexity - {A} Modern Approach}.
\newblock Cambridge University Press.

\bibitem[{Bertossi et~al.(2020)Bertossi, Li, Schleich, Suciu, and
  Vagena}]{BLSSV20}
Bertossi, L.; Li, J.; Schleich, M.; Suciu, D.; and Vagena, Z. 2020.
\newblock Causality-based explanation of classification outcomes.
\newblock In \emph{Proceedings of the Fourth Workshop on Data Management for
  End-To-End Machine Learning, DEEM@SIGMOD 2020}, 6:1--6:10.

\bibitem[{Cesari et~al.(2018)Cesari, Algaba, Moretti, and
  Nepomuceno}]{cesari2018application}
Cesari, G.; Algaba, E.; Moretti, S.; and Nepomuceno, J.~A. 2018.
\newblock An application of the Shapley value to the analysis of co-expression
  networks.
\newblock \emph{Applied network science} 3(1): 35.

\bibitem[{Choi et~al.(2020)Choi, Shih, Goyanka, and
  Darwiche}]{DBLP:journals/corr/abs-2007-01493}
Choi, A.; Shih, A.; Goyanka, A.; and Darwiche, A. 2020.
\newblock On Symbolically Encoding the Behavior of Random Forests.
\newblock \emph{CoRR} abs/2007.01493.

\bibitem[{Darwiche(2001)}]{darwiche2001tractability}
Darwiche, A. 2001.
\newblock On the tractable counting of theory models and its application to
  truth maintenance and belief revision.
\newblock \emph{J. Applied Non-Classical Logics} 11(1-2).

\bibitem[{Darwiche and Hirth(2020)}]{darwiche2020reasons}
Darwiche, A.; and Hirth, A. 2020.
\newblock On the reasons behind decisions.
\newblock \emph{arXiv preprint arXiv:2002.09284} .

\bibitem[{Darwiche and Marquis(2002)}]{DBLP:journals/jair/DarwicheM02}
Darwiche, A.; and Marquis, P. 2002.
\newblock A Knowledge Compilation Map.
\newblock \emph{J. Artif. Intell. Res.} 17: 229--264.

\bibitem[{Deng and Papadimitriou(1994)}]{deng1994complexity}
Deng, X.; and Papadimitriou, C.~H. 1994.
\newblock On the complexity of cooperative solution concepts.
\newblock \emph{Mathematics of operations research} 19(2): 257--266.

\bibitem[{Faigle and Kern(1992)}]{faigle1992shapley}
Faigle, U.; and Kern, W. 1992.
\newblock The Shapley value for cooperative games under precedence constraints.
\newblock \emph{International Journal of Game Theory} 21(3): 249--266.

\bibitem[{Hunter and Konieczny(2010)}]{hunter2010measure}
Hunter, A.; and Konieczny, S. 2010.
\newblock On the measure of conflicts: Shapley inconsistency values.
\newblock \emph{Artificial Intelligence} 174(14): 1007--1026.

\bibitem[{Livshits et~al.(2020)Livshits, Bertossi, Kimelfeld, and
  Sebag}]{LBKS20}
Livshits, E.; Bertossi, L.~E.; Kimelfeld, B.; and Sebag, M. 2020.
\newblock The Shapley value of tuples in query answering.
\newblock In \emph{23rd International Conference on Database Theory, {ICDT}
  2020, March 30-April 2, 2020, Copenhagen, Denmark}, volume 155, 20:1--20:19.

\bibitem[{Lundberg et~al.(2020)Lundberg, Erion, Chen, DeGrave, Prutkin, Nair,
  Katz, Himmelfarb, Bansal, and Lee}]{lundberg2020local}
Lundberg, S.~M.; Erion, G.; Chen, H.; DeGrave, A.; Prutkin, J.~M.; Nair, B.;
  Katz, R.; Himmelfarb, J.; Bansal, N.; and Lee, S.-I. 2020.
\newblock From local explanations to global understanding with explainable AI
  for trees.
\newblock \emph{Nature machine intelligence} 2(1): 2522--5839.

\bibitem[{Lundberg and Lee(2017)}]{lundberg2017unified}
Lundberg, S.~M.; and Lee, S.-I. 2017.
\newblock A unified approach to interpreting model predictions.
\newblock In \emph{Advances in neural information processing systems},
  4765--4774.

\bibitem[{Michalak et~al.(2013)Michalak, Aadithya, Szczepanski, Ravindran, and
  Jennings}]{michalak2013efficient}
Michalak, T.~P.; Aadithya, K.~V.; Szczepanski, P.~L.; Ravindran, B.; and
  Jennings, N.~R. 2013.
\newblock Efficient computation of the Shapley value for game-theoretic network
  centrality.
\newblock \emph{Journal of Artificial Intelligence Research} 46: 607--650.

\bibitem[{Molnar(2020)}]{molnar}
Molnar, C. 2020.
\newblock \emph{Interpretable machine learning: A guide for making black box
  models explainable}.
\newblock https://christophm.github.io/interpretable-ml-book.

\bibitem[{Peharz et~al.(2020)Peharz, Lang, Vergari, Stelzner, Molina, Trapp,
  Van~den Broeck, Kersting, and Ghahramani}]{peharz2020einsum}
Peharz, R.; Lang, S.; Vergari, A.; Stelzner, K.; Molina, A.; Trapp, M.; Van~den
  Broeck, G.; Kersting, K.; and Ghahramani, Z. 2020.
\newblock Einsum networks: Fast and scalable learning of tractable
  probabilistic circuits.
\newblock \emph{arXiv preprint arXiv:2004.06231} .

\bibitem[{Provan and Ball(1983)}]{provan1983complexity}
Provan, J.~S.; and Ball, M.~O. 1983.
\newblock The complexity of counting cuts and of computing the probability that
  a graph is connected.
\newblock \emph{SIAM Journal on Computing} 12(4): 777--788.

\bibitem[{Roth(1988)}]{roth1988shapley}
Roth, A.~E. 1988.
\newblock \emph{The Shapley value: essays in honor of Lloyd S. Shapley}.
\newblock Cambridge University Press.

\bibitem[{Shapley(1953)}]{shapley1953value}
Shapley, L.~S. 1953.
\newblock A value for n-person games.
\newblock \emph{Contributions to the Theory of Games} 2(28): 307--317.

\bibitem[{Shi et~al.(2020)Shi, Shih, Darwiche, and Choi}]{shi2020tractable}
Shi, W.; Shih, A.; Darwiche, A.; and Choi, A. 2020.
\newblock On tractable representations of binary neural networks.
\newblock \emph{arXiv preprint arXiv:2004.02082} .

\bibitem[{Shih, Choi, and Darwiche(2018{\natexlab{a}})}]{shih2018symbolic}
Shih, A.; Choi, A.; and Darwiche, A. 2018{\natexlab{a}}.
\newblock {A symbolic approach to explaining Bayesian network classifiers}.
\newblock In \emph{Proceedings {IJCAI}}, 5103--5111.

\bibitem[{Shih, Choi, and Darwiche(2018{\natexlab{b}})}]{shih2018formal}
Shih, A.; Choi, A.; and Darwiche, A. 2018{\natexlab{b}}.
\newblock Formal verification of Bayesian network classifiers.
\newblock In \emph{International Conference on Probabilistic Graphical Models},
  427--438.

\bibitem[{Shih, Darwiche, and Choi(2019)}]{shih2019verifying}
Shih, A.; Darwiche, A.; and Choi, A. 2019.
\newblock Verifying binarized neural networks by Angluin-style learning.
\newblock In \emph{International Conference on Theory and Applications of
  Satisfiability Testing}, 354--370. Springer.

\bibitem[{Shih et~al.(2019)Shih, Van~den Broeck, Beame, and
  Amarilli}]{shih2019smoothing}
Shih, A.; Van~den Broeck, G.; Beame, P.; and Amarilli, A. 2019.
\newblock Smoothing structured decomposable circuits.
\newblock In \emph{Advances in Neural Information Processing Systems},
  11416--11426.

\bibitem[{Van~den Broeck et~al.(2020)Van~den Broeck, Lykov, Schleich, and
  Suciu}]{broeck2020tractability}
Van~den Broeck, G.; Lykov, A.; Schleich, M.; and Suciu, D. 2020.
\newblock On the Tractability of SHAP Explanations.
\newblock \emph{arXiv preprint arXiv:2009.08634} .

\end{thebibliography}

\newpage

\onecolumn

\appendix

\begin{center}
{\LARGE {\bf Supplementary Material: Technical Appendix}}
\end{center}

\medskip

\section{Encoding Binary Decision Trees and FBDDs into Deterministic and Decomposable Boolean Circuits}
\label{app:KC}
In this appendix, we explain why binary decision trees and free binary decision diagrams (FBDDs) are special kinds of deterministic and decomposable Boolean circuits.
First we need to define these formalisms.

\paragraph*{Binary Decision Diagrams.}
A \emph{binary decision diagram} (BDD) over a set of variables
$X$ is a rooted directed acyclic graph~$D$ such that:
(i) each internal node is labeled with a variable
  from~$X$, and has exactly two outgoing edges: one
  labeled 0, the other one labeled 1; and
(ii) each leaf is labeled either~$0$ or~$1$.
Such a BDD represents a Boolean classifier in the following way.
Let~$\es$ be an entity over~$X$, and let $\pi_\es = u_1, \ldots,
u_m$ be the unique path in~$D$ satisfying the following conditions:
(a)~$u_1$ is the root of~$D$; (b)~$u_m$ is a leaf of~$D$; and (c) for
every~$i \in \{1, \ldots, m-1\}$, if the label of~$u_i$ is~$x \in X$,
then the label of the edge~$(u_i, u_{i+1})$ is equal to~$\es(x)$.
Then the {\em value of~$\es$ in~$D$}, denoted by~$D(\es)$, is defined
as the label of the leaf~$u_m$. Moreover, a binary decision
  diagram~$D$ is \emph{free} (FBDD) if for every path from the root to a
leaf, no two nodes on that path have the same label, and {\em a
  binary decision tree} is an FBDD whose underlying graph is a tree.\\

\noindent As we show next, FBDDs can be encoded in linear time as deterministic and decomposable Boolean circuits.

\paragraph*{Encoding FBDDs into deterministic and decomposable Boolean circuits (Folklore).}
Given an FBDD~$D$ over a set of variables~$X$, we explain how~$D$ can
be encoded as a deterministic and decomposable Boolean circuit~$C$
over~$X$. Notice that the technique used in this example also apply
to binary decision trees, as they are a particular case of FBDDs.  The
construction of~$C$ is done by traversing the structure of~$D$ in a
bottom-up manner. In particular, for every node~$u$ of~$D$, we
construct
a deterministic and decomposable circuit~$\alpha(u)$ that is
equivalent to the FBDD represented by the subgraph of~$D$ rooted
at~$u$. More precisely, for a leaf~$u$ of~$D$ that is labeled
with~$\ell \in \{0,1\}$, we define~$\alpha(u)$ to be the Boolean
circuit consisting of only one constant gate with label
$\ell$.
For an internal node~$u$ of~$D$ labeled with
variable~$x \in X$, let~$u_0$ and~$u_1$ be the nodes that we reach
from~$u$ by following the~$0$- and~$1$-labeled edge, respectively.
Then~$\alpha(u)$ is the Boolean circuit
depicted in the following figure:
\begin{center}
\begin{tikzpicture}
  \node[circ, minimum size=7mm] (n1) {{$\lor$}};
  \node[circw, below=5mm of n1] (n2) {};
  \node[circ, left=12mm of n2, minimum size=7mm] (n3) {{$\land$}}
  edge[arrout] (n1);
  \node[circ, right=12mm of n2, minimum size=7mm] (n4) {{$\land$}}
  edge[arrout] (n1);
  \node[circw, below=5mm of n3] (n5) {};
  \node[circ, left=4mm of n5, minimum size=7mm] (n6) {{$\neg$}}
  edge[arrout] (n3);
  \node[circw, right=4mm of n5, inner sep=-2] (n7) {{$\alpha(u_0)$}}
  edge[arrout] (n3);
  \node[circw, below=5mm of n4] (n8) {};
  \node[circ, left=4mm of n8, minimum size=7mm] (n9) {{$x$}}
  edge[arrout] (n4);
  \node[circw, right=4mm of n8, inner sep=-2] (n10) {{$\alpha(u_1)$}}
  edge[arrout] (n4);
  \node[circ, below=5mm of n6, minimum size=7mm] (n11) {{$x$}}
  edge[arrout] (n6);

\end{tikzpicture}
\end{center}

It is clear that the circuit that we obtain is equivalent to the input FBDD. We now argue that this circuit is deterministic and decomposable.
For the~$\lor$-gate shown in the figure, if an entity~$\es$ is
accepted by the Boolean circuit in its left-hand size, then~$\es(x) =
0$, while if an entity~$\es$ is accepted by the Boolean circuit in its
right-hand size, then~$\es(x) = 1$. Hence, we have that this
$\lor$-gate is deterministic, from which we conclude that~$\alpha(u)$
is deterministic, as~$\alpha(u_0)$ and~$\alpha(u_1)$ are also
deterministic by construction. Moreover, the~$\land$-gates shown in
the figure are decomposable as variable~$x$ is mentioned neither
in~$\alpha(u_0)$ nor in~$\alpha(u_1)$: this is because~$D$ is a
\emph{free} BDD. Thus, we conclude that $\alpha(u)$ is decomposable,
as~$\alpha(u_0)$ and~$\alpha(u_1)$ are decomposable by
construction. Finally, if~$u_{\mathsf{root}}$ is the root of~$D$, then
by construction we have that $\alpha(u_{\mathsf{root}})$ is a
deterministic and decomposable Boolean circuit equivalent to~$D$.
Note that this encoding can trivially be done in linear time.  Thus, we
often say, by abuse of terminology, that ``FBDDs (or binary decision trees)
are restricted kinds of deterministic and decomposable circuits".

\section{Proof of Theorem \ref{thm:shapscore-d-Ds}}
\label{sec:proof:thm:shapscore-d-Ds}
To complete the proof of Theorem \ref{thm:shapscore-d-Ds}, we need to
prove equation \eqref{eq:and-gate}. Recall that in this case, we have
that~$g$ is an~$\land$-gate, which is decomposable and has fan-in at
most 2. Moreover, we assume that~$g$ has exactly two input gates~$g_1$
and~$g_2$, and we fix~$k \in \{0,\ldots,|\var(g)|\}$.

To prove equation \eqref{eq:and-gate}, we need the following
notation. For two disjoint sets of variables~$X_1,X_2$ and
entities~$\es_1 \in \eset(X_1)$,~$\es_2 \in \eset(X_2)$, we denote
by~$\es_1 \cup \es_2$ the entity over~$X_1 \cup X_2$ that coincides
with~$\es_1$ over~$X_1$ and with~$\es_2$ over~$X_2$ (that is,~$\es_1
\cup \es_2 \in \eset(X_1 \cup X_2)$,~$(\es_1 \cup \es_2)(x_1) =
\es_1(x_1)$ for every~$x_1 \in X_1$, and~$(\es_1 \cup \es_2)(x_2) =
\es_2(x_2)$ for every~$x_2 \in X_2$). Moreover, for two sets~$S_1
\subseteq \eset(X_1)$,~$S_2 \subseteq \eset(X_2)$, we denote by~$S_1
\otimes S_2$ the set of entities over~$X_1\cup X_2$ defined as
\begin{align*}
S_1 \otimes S_2 \ \coloneqq \ \{\es_1\cup \es_2 \mid \es_1 \in
S_1 \text{ and } \es_2\in S_2\}.
\end{align*}
Given that~$g$ is a decomposable~$\land$-gate, we have that:
\begin{align*}
\sat(R_g) \ = \ \sat(R_{g_1}) \otimes \sat(R_{g_2}).
\end{align*}
Moreover, we have that~$\sat(R_g,\es_{|\var(g)},k) =
\sat(R_g) \cap \{ \es' \in \var(g) \mid |\sims(\es_{|\var(g)},\es')|=k\}$ and
\begin{align*}
&\big(\sat(R_{g_1}) \otimes \sat(R_{g_2})\big) \cap \{ \es' \in \var(g) \mid
|\sims(\es_{|\var(g)},\es')|=k\}\\
&\hspace{10pt} = \{\es_1\cup \es_2 \mid \es_1 \in \sat(R_{g_1}) \text{ and } \es_2 \in \sat(R_{g_2}) \} \cap \{ \es' \in \var(g) \mid |\sims(\es_{|\var(g)},\es')|=k\}\\
&\hspace{10pt} = \{\es_1\cup \es_2 \mid \es_1 \in \sat(R_{g_1}), \es_2 \in \sat(R_{g_2}), \text{ and } |\sims(\es_{|\var(g)},\es_1 \cup \es_2)|=k\} \\
&\hspace{10pt} = \{\es_1\cup \es_2 \mid \es_1 \in \sat(R_{g_1}), \es_2 \in \sat(R_{g_2}), \text{ and there exist } i \in \{0, \ldots, |\var(g_1)|\},\\
&\hspace{34pt} j \in \{0, \ldots, |\var(g_2)|\} \text{ such that } |\sims(\es_{|\var(g_1)},\es_1)|=i, |\sims(\es_{|\var(g_2)},\es_2)|=j, \text{ and } i+j = k\}\\
&\hspace{10pt} = \bigcup_{\substack{i \in \{0, \ldots, |\var(g_1)|\}\\ j \in \{0, \ldots, |\var(g_2)|\}\\ i+j=k}}
\{\es_1 \mid \es_1 \in \sat(R_{g_1}) \text{ and } |\sims(\es_{|\var(g_1)},\es_1)|=i\} \ \otimes\\[-27pt]
&\hspace{222pt} \{\es_2 \mid \es_2 \in \sat(R_{g_2}) \text{ and } |\sims(\es_{|\var(g_2)},\es_2)|=j\}\\[11pt]
&\hspace{10pt} = \bigcup_{\substack{i \in \{0, \ldots, |\var(g_1)|\}\\ j \in \{0, \ldots, |\var(g_2)|\}\\ i+j=k}}
\sat(R_{g_1}, \es_{|\var(g_1)}, i) \otimes
\sat(R_{g_2}, \es_{|\var(g_2)}, j).
\end{align*}
Combining the previous results, we obtain that
\begin{align*}
\sat(R_g,\es_{|\var(g)},k) \ = \ \bigcup_{\substack{i \in \{0, \ldots, |\var(g_1)|\}\\ j \in \{0, \ldots, |\var(g_2)|\}\\ i+j=k}}
\sat(R_{g_1}, \es_{|\var(g_1)}, i) \otimes
\sat(R_{g_2}, \es_{|\var(g_2)}, j).
\end{align*}
Thus, given that for every pair~$i_1, i_2 \in \{0, \ldots, |\var(g_1)|\}$
such that~$i_1 \neq i_2$, it holds that
\begin{align*}
\sat(R_{g_1}, \es_{|\var(g_1)}, i_1) \cap \sat(R_{g_1}, \es_{|\var(g_1)}, i_2) \ = \ \emptyset
\end{align*}
(and similarly for~$R_{g_2}$), we conclude by the definitions of~$\alpha_g^k$,~$\alpha_{g_1}^i$,~$\alpha_{g_2}^j$ that
\begin{align*}
\alpha_g^k \ = \ \sum_{\substack{i \in \{0, \ldots, |\var(g_1)|\}\\ j \in \{0, \ldots, |\var(g_2)|\}\\ i+j=k}} \alpha_{g_1}^i \cdot \alpha_{g_2}^j,
\end{align*}
which was to be shown.

\section{Proof of Lemma~\ref{lem:limits}}
\label{sec:proof:lem:limits}
The validity of the equation from Lemma~\ref{lem:limits} will be consequence of the following property of the $\shap$-score:
for every Boolean classifier~$M$ over~$X$, entity~$\es\in \eset(X)$ and feature~$x\in X$, it holds that
\begin{align}
\label{eq:efficiency}
\sum_{x\in X} \shap(M,\es,x) \ = \ \phi(M,\es,X) - \phi(M,\es,\emptyset).
\end{align}
This property is often called the \emph{efficiency} property of the Shapley value. 
Although this is folklore,
we prove Equation~\eqref{eq:efficiency} here for
the reader's convenience.  For a permutation~$\pi:X\to \{1,\ldots,n\}$
and~$x\in X$, let~$S_\pi^x$ denote the set of features that appear
before~$x$ in~$\pi$.  Formally, $S_\pi^x \coloneqq \{y \in
X \mid \pi(y) < \pi(x)\}$. Then, letting~$\Pi(X)$ be the set of all
permutations~$\pi:X\to \{1,\ldots,n\}$, observe that the definition of~$\shap$-score from Definition~\ref{def:Shapley} can be rewritten as
\[
 \shap(M,\es,x) \ = \frac{1}{n!}\sum_{\pi \in \Pi(X)}  \big(\phi(M,\es,S_\pi^x \cup \{x\}) - \phi(M,\es,S_\pi^x)\big).
\]
Hence, we have that
\begin{align*}
\sum_{x\in X} \shap(M,\es,x)\ &= \ \frac{1}{n!} \sum_{x\in X} \ \sum_{\pi \in \Pi(X)}  \big(\phi(M,\es,S_\pi^x \cup \{x\}) - \phi(M,\es,S_\pi^x)\big)\\
&= \ \frac{1}{n!}\sum_{\pi \in \Pi(X)} \ \sum_{x\in X}  \big(\phi(M,\es,S_\pi^x \cup \{x\}) - \phi(M,\es,S_\pi^x)\big)\\
&= \ \frac{1}{n!}\sum_{\pi \in \Pi(X)} \big(\phi(M,\es,X) - \phi(M,\es,\emptyset)\big),
\end{align*}
where the last equality is obtained by noticing that the inner sum
is a telescoping sum.
This establishes Equation~\eqref{eq:efficiency}.
Now, we simply use the definition of~$\phi(\cdot,\cdot,\cdot)$ in this equation to obtain
\begin{align*}
\sum_{x\in X} \shap(M,\es,x)\ &= \ M(\es) - \frac{1}{2^n} \sum_{\es' \in \eset(X)} M(\es')\\
&= \ M(\es) - \frac{\ssat(M)}{2^n},
\end{align*}
thus proving Lemma~\ref{lem:limits}.

\section{Proof of Theorem~\ref{thm:shapscore-limits}}
\label{sec:proof:shapscore-limits}
We have already
%clearly
explained in the body of this article why Item (1) of Theorem~\ref{thm:shapscore-limits} holds. We now justify that (2) holds, by proving that from every 3-CNF formula $\psi$, we can build 
in polynomial time an equivalent deterministic Boolean circuit $C_\psi$.

Given a clause $\gamma = (\ell_1 \lor \ell_2 \lor \ell_3)$ consisting
of three literals, define $d(\gamma)$ as the propositional formula
\begin{multline*}
(\ell_1 \land \ell_2 \land \ell_3) \,\lor\,
(\overline{\ell_1} \land \ell_2 \land \ell_3) \,\lor\,
(\ell_1 \land \overline{\ell_2} \land \ell_3) \,\lor\,
(\ell_1 \land \ell_2 \land \overline{\ell_3}) \,\lor\,
(\overline{\ell_1} \land \overline{\ell_2} \land \ell_3) \,\lor\,
(\overline{\ell_1} \land \ell_2 \land \overline{\ell_3}) \,\lor\,
(\ell_1 \land \overline{\ell_2} \land \overline{\ell_3}), 
\end{multline*}
where $\overline{x} = \neg x$ and $\overline{\neg x} = x$, for each
propositional variable $x$. Clearly, $\gamma$ and $d(\gamma)$ are
equivalent formulae. Moreover, given a propositional formula $\psi
= \gamma_1 \land \cdots \land \gamma_k$ in 3-CNF, where each
$\gamma_i$ is a clause with three literals, define $d(\psi)$ as the
propositional formula $d(\gamma_1) \land \cdots \land
d(\gamma_k)$. Clearly, $\psi$ and $d(\psi)$ are equivalent formulae,
from which we have that $\ssat(\psi) = \ssat(d(\psi))$. Moreover,
$d(\psi)$ can be directly transformed into a deterministic Boolean
Circuit $C_{d(\psi)}$. Hence, from the fact that $C_{d(\psi)}$ can be
constructed in polynomial time from an input propositional formula
$\psi$ in 3-CNF, and the fact that $\ssat(\cdot)$ is $\shp$-hard for
3-CNFs, we have that Theorem~\ref{thm:shapscore-limits}~(2) holds
from Lemma \ref{lem:limits}.

\section{Proof of Theorem \ref{thm:shapscore-d-Ds-prod}}
\label{sec:proof:thm:shapscore-d-Ds-prod}
In this section, we prove that computing the~$\shap$-score for Boolean classifiers
given as deterministic and decomposable Boolean circuits can be done in
polynomial time for product distributions; see
Theorem~\ref{thm:shapscore-d-Ds-prod} for the formal statement.  As mentioned
in Section~\ref{sec:prod}, the proof will be slightly more involved than that
of Theorem~\ref{thm:shapscore-d-Ds}; this is because not all entities have the
same probability, and this prevents us from reducing
to~$\ssat(\cdot,\cdot,\cdot)$.  Instead, we will use a different definition
of~$\H(\cdot,\cdot,\cdot)$ and show that it can directly be computed bottom-up
on the circuits. 

But before that, we introduce new notation that will be more convenient for
this proof.  For a Boolean classifier~$M$ over features~$X$, probability
distribution\footnote{Note that~$\mathcal{D}:\eset(X) \to [0,1]$ is actually a
\emph{probability mass function}, but we will abuse notation to simplify the
presentation.}~$\mathcal{D}:\eset(X) \to [0,1]$, entity~$\es\in \eset(X)$ and
set~$S\subseteq X$, we define

\begin{align*}
\phi_\mathcal{D}(M,\es,S) \ &\coloneqq \
\mathbb{E}_{\es' \sim \mathcal{D}}\big[M(\es') \mid \es' \in \asm(\es, S) \big].
\end{align*}
Notice that we now use the notation~$\mathbb{E}_{\es' \sim
\mathcal{D}}[f(\es')]$ for expected value of a random variable~$f$, instead of the
simpler~$\mathbb{E}[f(\es')]$ that we used in the body of the paper. This is because
we will sometimes need to make explicit what is the probability distribution to consider.
Then given a Boolean classifier~$M$ over a set of features~$X$, a probability
distribution~$\mathcal{D}$ over~$\eset(X)$, an entity~$\es$ over~$X$, and a
feature~$x \in X$, the {\em Shapley value of feature~$x$ in~$\es$ with respect
to~$M$ under~$\mathcal{D}$} is defined as

\begin{align}
\shap_\mathcal{D}(M,\es,x) \ \coloneqq \ \sum_{S \subseteq X\setminus\{x\}}
\frac{|S|! \, (|X| - |S| - 1)!}{|X|!} \, \bigg(\phi_\mathcal{D}(M, \es,S \cup \{x\})
- \phi_\mathcal{D}(M, \es,S)\bigg).
\end{align}

Note that by taking~$\mathcal{D}$ to be the uniform probability distribution
on~$\eset(X)$, we obtain the definition that we considered in
Section~\ref{sec:preliminaries}.  In this section we will consider the product
distributions~$\prd$ as defined in Section~\ref{sec:prod}. With these notation
in place, we can now start the proof.

For a Boolean classifier~$M$ over a set of variables~$X$, a probability distribution~$\mathcal{D}$ over~$\eset(X)$, an
entity~$\es\in \eset(X)$ and a natural number~$k \leq |X|$, define
\[\H_{\mathcal{D}}(M,\es, k) \ \coloneqq \ \sum_{\substack{S \subseteq X\\|S|=k}} \, \, \, \mathbb{E}_{\es' \sim \mathcal{D}}[M(\es') \mid \es' \in \asm(\es,S)].\]

Our proof of Theorem~\ref{thm:shapscore-d-Ds-prod} is
divided into two modular parts. The first part, which is developed in
Section \ref{subsec:shap-to-H}, consists in showing that the
problem of computing~$\shap_{\Pi_\cdot}(\cdot,\cdot,\cdot)$ can be reduced in
polynomial time to that of computing~$\H_{\Pi_\cdot}(\cdot,\cdot,\cdot)$. This
part of the proof is again a sequence of formula manipulations, and it only
uses the fact that deterministic and decomposable circuits can
be efficiently conditioned on a variable value.
In the second part of the proof, which is developed in
Section \ref{subsec:H}, we show that
computing~$\H_{\Pi_\cdot}(\cdot,\cdot,\cdot)$ can be done in polynomial time
for deterministic and decomposable Boolean circuits.
It is in this part that the magic of deterministic and decomposable
circuits really operates.

\subsection{Reducing in polynomial-time from~$\shap_{\Pi_\cdot}(\cdot,\cdot,\cdot)$ to~$\H_{\Pi_\cdot}(\cdot,\cdot,\cdot)$}
\label{subsec:shap-to-H}

In this section, we show that for deterministic and decomposable
Boolean circuits and under product distributions, the computation of the~$\shap$-score can be reduced
in polynomial time to the computation
of~$\H_{\Pi_\cdot}(\cdot,\cdot,\cdot)$. 
We wish to compute~$\shap_{\Pi_p}(C,\es,x)$, for a given deterministic and decomposable
circuit~$C$ over a set of variables~$X$, probability mapping~$p:X\to [0,1]$,
entity~$\es\in \eset(X)$ and feature~$x\in X$. Define
\[\diff_k(C,\es,x) \ \coloneqq \ \sum_{\substack{S\subseteq X\setminus \{x\}\\|S|=k}} (\phi_{\Pi_p}(C,\es,S\cup \{x\}) - \phi_{\Pi_p}(C,\es,S)),\]
and let~$n = |X|$. We then have
\begin{align*}
\shap_{\Pi_p}(C,\es,x) \ &= \ \sum_{S \subseteq X\setminus \{x\}} \frac{|S|!(n-|S|-1)!}{n!}(\phi_{\Pi_p}(C,\es,S\cup \{x\}) - \phi_{\Pi_p}(C,\es,S))\\
&= \ \sum_{k=0}^{n-1} \frac{k!(n-k-1)!}{n!} \diff_k(C,\es,x).
\end{align*}
\noindent Therefore, it is enough to show how to compute in polynomial time the
quantities~$\diff_k(C,\es,x)$ for each~$k\in \{0,\ldots,n-1\}$. By definition
of~$\phi_{_{\Pi_\cdot}}(\cdot,\cdot,\cdot)$ we have that
\begin{align}
\label{eq-diffk-H}
\tag{\dag}
\diff_k(C,\es,x) \ =& \ \bigg[\sum_{\substack{S\subseteq X\setminus \{x\}\\|S|=k}} \mathbb{E}_{\es' \sim \Pi_p}[C(\es') \mid \es' \in \asm(\es,S\cup \{x\})]\bigg]\\
\notag
& \hspace{100pt} - \bigg[\sum_{\substack{S\subseteq X\setminus \{x\}\\|S|=k}} \mathbb{E}_{\es' \sim \Pi_p}[C(\es') \mid \es' \in \asm(\es,S)]\bigg].
\end{align}
\noindent In this expression, let~$\alpha$ and~$\beta$ be the left- and
right-hand side terms in the subtraction. For a set of features~$X$,
mapping~$p:X \to [0,1]$ and~$S \subseteq X$, we write~$p_{|S}:S\to [0,1]$ for
the mapping that is the restriction of~$p$ to~$S$, and~$\Pi_{p_{|S}}: \eset(S)
\to [0,1]$ for the corresponding product distribution on~$\eset(S)$. Looking closer at~$\beta$, we
have that

\begin{align*}
\beta \ =& \ \sum_{\substack{S\subseteq X\setminus \{x\}\\|S|=k}} \mathbb{E}_{\es' \sim \Pi_p}[C(\es') \mid \es' \in \asm(\es,S)]\\
=& \ \, p(x) \cdot \sum_{\substack{S\subseteq X\setminus \{x\}\\|S|=k}} \mathbb{E}_{\es' \sim \Pi_p}[C(\es') \mid \es' \in \asm(\es,S) \text{ and $\es'(x)=1$}]\\
& \hspace{100pt} + (1-p(x)) \cdot \sum_{\substack{S\subseteq X\setminus \{x\}\\|S|=k}} \mathbb{E}_{\es' \sim \Pi_p}[C(\es') \mid \es' \in \asm(\es,S) \text{ and $\es'(x)=0$}]\\
=& \ \, p(x) \cdot \sum_{\substack{S\subseteq X\setminus \{x\}\\|S|=k}} \mathbb{E}_{\es'' \sim \Pi_{p_{|X\setminus \{x\}}}} [C_{+x}(\es'') \mid \es'' \in \asm(\es_{|X\setminus \{x\}},S)]\\
& \hspace{100pt} + (1-p(x)) \cdot \sum_{\substack{S\subseteq X\setminus \{x\}\\|S|=k}} \mathbb{E}_{\es'' \sim \Pi_{p_{|X\setminus \{x\}}}} [C_{-x}(\es'') \mid \es'' \in \asm(\es_{|X\setminus \{x\}},S)]\\
=& \ \, p(x) \cdot \H_{\Pi_{p_{|X\setminus \{x\}}}}(C_{+x},\es_{|X\setminus \{x\}},k) \, + \, (1-p(x)) \cdot \H_{\Pi_{p_{|X\setminus \{x\}}}}(C_{-x},\es_{|X\setminus \{x\}},k),
\end{align*}

\noindent where the last equality is obtained simply by using the definition
of~$\H_\cdot(\cdot,\cdot,\cdot)$. Hence, if we could compute in polynomial
time~$H_{\Pi_\cdot}(\cdot,\cdot,\cdot)$ for deterministic and decomposable Boolean
circuits, then we could compute~$\beta$ in polynomial time as~$C_{+x}$
and~$C_{-x}$ can be computed in linear time from~$C$, and they
are deterministic and decomposable Boolean circuits as well.
We now inspect the term~$\alpha$, which we recall is
\begin{align*}
\alpha \ =& \ \sum_{\substack{S\subseteq X\setminus \{x\}\\|S|=k}} \mathbb{E}_{\es' \sim \Pi_p}[C(\es') \mid \es' \in \asm(\es,S\cup \{x\})].
\end{align*}

\noindent But then observe that, for~$S\subseteq X\setminus \{x\}$ and~$\es' \in \asm(\es,S\cup \{x\})$, it holds that
\begin{equation*}
C(\es') \ =
\begin{cases}
C_{+x}(\es'_{|X\setminus\{x\}}) & \text{if } \es(x)=1\\
C_{-x}(\es'_{|X\setminus\{x\}}) & \text{if } \es(x)=0
\end{cases}.
\end{equation*}

\noindent Therefore, if~$\es(x)=1$, we have that

\begin{align*}
\alpha \ =& \ \sum_{\substack{S\subseteq X\setminus \{x\}\\|S|=k}} \mathbb{E}_{\es'' \sim \Pi_{p_{|X\setminus \{x\}}}}[C_{+x}(\es'') \mid \es'' \in \asm(\es_{|X\setminus \{x\}},S)]\\
 =& \ \H_{\Pi_{p_{|X\setminus \{x\}}}}(C_{+x},\es_{|X\setminus \{x\}},k)
\end{align*}
whereas if~$\es(x)=0$, we have that
\begin{align*}
\alpha \ = \ \H_{\Pi_{p_{|X\setminus \{x\}}}}(C_{-x},\es_{|X\setminus \{x\}},k).
\end{align*}
Hence, again, if we were able to compute in polynomial
time~$\H_{\Pi_\cdot}(\cdot,\cdot,\cdot)$ for deterministic and decomposable Boolean
circuits, then we could compute~$\alpha$ in polynomial time (as
deterministic and decomposable Boolean circuits~$C_{+x}$ and~$C_{-x}$
can be computed in linear time from~$C$).
But then we deduce from~\eqref{eq-diffk-H} that~$\diff_k(C,\es,x)$ could
be computed in polynomial time for each~$k\in \{0,\ldots,n-1\}$, from
which we have that~$\shap_{\Pi_p}(C,\es,x)$ could be computed in polynomial
time, therefore concluding the existence of the reduction claimed in this section.

\subsection{Computing~$\H_{\Pi_\cdot}(\cdot,\cdot,\cdot)$ in polynomial time}
\label{subsec:H}

We now take care of the second part of the proof of
Theorem~\ref{thm:shapscore-d-Ds-prod}, i.e., proving that
computing~$\H_{\Pi_\cdot}(\cdot,\cdot,\cdot)$ for deterministic and
decomposable Boolean circuits can be done in polynomial time. Formally:

\begin{lemma}
\label{lem:H}
The following problem can be solved in polynomial time. Given as input
a deterministic and decomposable Boolean circuit~$C$ over
a set of variables~$X$, rational probability values~$p(x)$ for each~$x\in X$,
an entity~$\es\in \eset(X)$ and a natural number ~$k \leq |X|$, compute
the quantity~$\H_{\Pi_p}(C,\es,k)$.
\end{lemma}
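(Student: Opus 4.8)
The plan is to mimic the bottom-up computation of $\ssat(\cdot,\cdot,\cdot)$ from Section~\ref{subsec:ssatle}, replacing counts of entities by their total probability under $\Pi_p$. First I would preprocess $C$ exactly as in Section~\ref{subsec:ssatle}: rewrite every $\lor$- and $\land$-gate to fan-in at most $2$ and smooth the circuit. Both are syntactic, semantics-preserving transformations, so the result is still deterministic and decomposable, computes the same classifier, and hence has the same value $\H_{\Pi_p}(C,\es,k)$. I would also assume $\var(C)=X$: otherwise, conjoin the output gate with a tautology $\bigwedge_{y\in X\setminus\var(C)}(y\lor\lnot y)$ over the missing variables (this preserves determinism, decomposability and, after re-applying the fan-in rewriting, the normal form, and does not change the classifier). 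We may also assume each $p(y)\in(0,1)$, so that every conditioning event $\asm(\es,S)$ has positive probability.

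For a gate $g$ and $\ell\in\{0,\dots,|\var(g)|\}$ I would define
\[
\gamma_g^{\ell} \ \coloneqq \ \sum_{\substack{S\subseteq\var(g)\\|S|=\ell}}\ \sum_{\es''\in\eset(\var(g)\setminus S)}\Bigl(\prod_{\substack{y\in\var(g)\setminus S\\\es''(y)=1}}p(y)\Bigr)\Bigl(\prod_{\substack{y\in\var(g)\setminus S\\\es''(y)=0}}(1-p(y))\Bigr)\cdot R_g(\es_{|S}\cup\es''),
\]
where $\es_{|S}\cup\es''$ is the entity over $\var(g)$ that agrees with $\es$ on $S$ and with $\es''$ elsewhere. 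This is precisely $\H_{\Pi_{p_{|\var(g)}}}(R_g,\es_{|\var(g)},\ell)$, and since $\var(g_\out)=X$ we get $\H_{\Pi_p}(C,\es,k)=\gamma_{g_\out}^k$; so it suffices to compute all the $\gamma_g^{\ell}$ bottom-up.

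The recurrences are the probability-weighted analogues of those in Section~\ref{subsec:ssatle}. Base cases: for a constant gate with label $a$, $\gamma_g^0=a$; for a variable gate with label $y$, $\gamma_g^0=p(y)$ and $\gamma_g^1=\es(y)$. For a $\lnot$-gate with child $g'$: since the inner weighted sum over $\es''$ equals $1$ for every $S$, one has $\gamma_g^{\ell}=\binom{|\var(g)|}{\ell}-\gamma_{g'}^{\ell}$. For a deterministic, smooth $\lor$-gate with children $g_1,g_2$ (so $\var(g_1)=\var(g_2)=\var(g)$): determinism gives that $R_{g_1}$ and $R_{g_2}$ are never simultaneously $1$, hence $R_g=R_{g_1}+R_{g_2}$ pointwise, and by linearity $\gamma_g^{\ell}=\gamma_{g_1}^{\ell}+\gamma_{g_2}^{\ell}$. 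For a decomposable $\land$-gate with children $g_1,g_2$: $\var(g)=\var(g_1)\sqcup\var(g_2)$, each size-$\ell$ subset $S$ of $\var(g)$ splits uniquely into $S_1\sqcup S_2$ with $S_i=S\cap\var(g_i)$, and both $R_g$ and the weight factor as products over the two blocks, so $\gamma_g^{\ell}=\sum_{i+j=\ell}\gamma_{g_1}^{i}\cdot\gamma_{g_2}^{j}$ --- the weighted version of \eqref{eq:and-gate}, proved by the same $\otimes$-style manipulation as in Appendix~\ref{sec:proof:thm:shapscore-d-Ds}. Correctness of the whole computation then follows by induction on the gates in topological order.

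For the running time, there are polynomially many gates, polynomially many values $\ell$ per gate, and each $\gamma_g^{\ell}$ is obtained by a polynomial number of additions and multiplications of previously computed values. The only real point to check is that the rationals stay of polynomial bit-length: writing $p(y)=a_y/b_y$ in lowest terms, every $\gamma_g^{\ell}$ can be written over the common denominator $\prod_{y\in X}b_y$ with numerator at most $\binom{|X|}{\ell}\prod_{y\in X}b_y$, and crucially the product $\gamma_{g_1}^{i}\cdot\gamma_{g_2}^{j}$ in the $\land$-case does \emph{not} inflate this common denominator because $\var(g_1)$ and $\var(g_2)$ are disjoint. I expect the main obstacle to be exactly this $\land$-gate step: making rigorous that the conditional expectation of $R_g$ given agreement with $\es$ on $S$ factors as the product of the corresponding conditional expectations of $R_{g_1}$ and $R_{g_2}$ --- the place where decomposability and the independence built into $\Pi_p$ combine --- together with the bookkeeping needed to keep the arithmetic polynomially bounded; the remaining cases are routine transcriptions of the uniform argument.
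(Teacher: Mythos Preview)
Your proposal is correct and is essentially the same as the paper's own proof: the paper also preprocesses to fan-in~$2$ and smooths, defines~$\alpha_g^\ell \coloneqq \H_{\Pi_{p_{|\var(g)}}}(R_g,\es_{|\var(g)},\ell)$ (your~$\gamma_g^\ell$), and derives the identical recurrences for constant, variable, $\lnot$-, $\lor$-, and $\land$-gates. Your extra care in forcing~$\var(C)=X$, assuming~$p(y)\in(0,1)$, and bounding the bit-lengths of the rationals are points the paper leaves implicit, but the core argument coincides.
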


We first perform two preprocessing steps on~$C$, which will simplify the proof. These are the same preprocessing steps that we did in Section~\ref{sec:shapscore-d-Ds},
but we added more details for the reader's convenience.

\begin{description}
    \item[Rewriting to fan-in at most 2.] First, we modify the
		circuit~$C$ so that the fan-in of every~$\lor$-
		and~$\land$-gate is at most~$2$. This can simply be
		done in linear time by rewriting every~$\land$-gate (resp.,
		and~$\lor$-gate) of fan-in~$m > 2$ with a chain
		of~$m-1$ $\land$-gates (resp.,~$\lor$-gates) of
		fan-in~$2$. It is
		clear that the resulting Boolean circuit is
		deterministic and decomposable. Hence, from now on we
		assume that the fan-in of every~$\lor$-
		and~$\land$-gate of~$C$ is at most~$2$.

\item[Smoothing the circuit.] Recall that a deterministic and decomposable circuit~$C$
		is \emph{smooth} if for
		every~$\lor$-gate~$g$ and input gates~$g_1,g_2$
		of~$g$, we have that~$\var(g_1) = \var(g_2)$,
		and we call such an~$\lor$-gate smooth. We modify as
		follows the circuit~$C$ so that it becomes smooth.
		Recall that by the previous paragraph, we assume that
		the fan-in of every~$\lor$-gate is at most~$2$.  For
		an~$\lor$-gate~$g$ of~$C$ having two input
		gates~$g_1,g_2$ violating the smoothness condition,
		define $S_1 \coloneqq \var(g_1) \setminus \var(g_2)$
		and~$S_2 \coloneqq \var(g_2) \setminus \var(g_1)$, and
		let~$d_{S_1}$, $d_{S_2}$ be Boolean circuits defined
		as follows. If~$S_1 = \emptyset$, then~$d_{S_1}$
		consist of the single constant gate~$1$. Otherwise,
		$d_{S_1}$ encodes the propositional formula
		$\land_{x\in S_1} (x \lor \lnot x)$, but it is
		constructed in such a way that every~$\land$- and
		$\lor$-gate has fan-in at most~$2$. Boolean circuit
		$d_{S_2}$ is constructed exactly as~$d_{S_1}$ but
		considering the set of variables~$S_2$ instead of
		$S_1$.  Observe that~$\var(d_{S_1})=S_1$,
		$\var(d_{S_2})=S_2$ and~$d_{S_1}$,~$d_{S_2}$ always
		evaluate to~$1$.  Then, we transform~$g$ into a smooth
		$\lor$-gate by replacing gate~$g_1$ by a
		decomposable~$\land$-gate~$(g_1 \land d_{S_2})$, and
		gate~$g_2$ by a decomposable~$\land$-gate~$(g_2 \land
		d_{S_1})$. This does not change the Boolean classifier
		computed.  Moreover, since~$\var(g_1 \land d_{S_2})
		= \var(g_2 \land d_{S_1}) = \var(g_1) \cup \var(g_2)$,
		we have that~$g$ is now smooth. Finally, the resulting
		Boolean circuit is deterministic and
		decomposable. Hence, by repeating the previous
		procedure for each non-smooth~$\lor$-gate, we conclude
		that~$C$ can be transformed into an equivalent smooth
		Boolean circuit in polynomial time, which is
		deterministic and decomposable, and where each gate
		has fan-in at most 2. Thus, from now on we also assume
		that~$C$ is smooth.
\end{description}

\begin{proof}[Proof of Lemma \ref{lem:H}]
Let~$C$ be a deterministic and decomposable Boolean circuit~$C$ over a set of
variables~$X$, $p:X \to [0,1]$ be a rational probability mapping, $\es\in
\eset(X)$ and $k$ a natural number such that~$k \leq |X|$, and let~$n = |X|$.
For a gate~$g$ of~$C$, let~$R_g$ be the Boolean circuit over~$\var(g)$ that is
defined by considering the subgraph of~$C$ induced by the set of gates~$g'$
in~$C$ for which there exists a path from~$g'$ to~$g$ in~$C$.\footnote{The only
difference between~$R_g$ and~$C_g$ (defined in Section~\ref{sec:preliminaries})
is that we formally regard~$R_g$ as a Boolean classifier over~$\var(g)$, while
we formally regarded~$C_g$ as a Boolean classifier over~$X$.} Notice that~$R_g$
is a deterministic and decomposable Boolean circuit with output gate~$g$.
Moreover, for a gate~$g$ and natural number~$l \leq |\var(g)|$,
define~$\alpha_g^l \coloneqq \H_{\Pi_{p_{|\var(g)}}}(R_g,\es_{|\var(g)},l)$, which we recall is equal, by definition, to
\[ \H_{\Pi_{p_{|\var(g)}}}(R_g,\es_{|\var(g)},l) = \sum_{\substack{S \subseteq \var(g)\\|S|=l}} \, \, \mathbb{E}_{\es' \sim \Pi_{p_{|\var(g)}}}[R_g(\es') \mid \es' \in \asm(\es_{|\var(g)},S)].\]
We will show how to
compute all the values~$\alpha_g^l$ for every gate~$g$ of~$C$ and~$l\in
\{0,\ldots,|\var(g)|\}$ in polynomial time. This will conclude the proof since,
for the output gate~$g_\out$ of~$C$, we have that~$\alpha_{g_\out}^k =
\H_{\Pi_p}(C,\es,k)$.  Next we explain how to compute these values by bottom-up
induction on~$C$.

\begin{description}
    \item[Variable gate.] $g$ is a variable gate with label~$y \in X$,
        so that~$\var(g)=\{y\}$. Then for~$\es' \in \eset(\{y\})$ we have~$R_g(\es')=\es'(y)$, therefore
		\begin{align*}
		\alpha^0_g &= \ \sum_{\substack{S \subseteq \{y\}\\|S|=0}} \, \, \mathbb{E}_{\es' \sim \Pi_{p_{|\{y\}}}}[\es'(y) \mid \es' \in \asm(\es_{|\{y\}},S)]\\
		&= \ \mathbb{E}_{\es' \sim \Pi_{p_{|\{y\}}}}[\es'(y) \mid \es' \in \asm(\es_{|\{y\}},\emptyset)]\\
		&= \ \mathbb{E}_{\es' \sim \Pi_{p_{|\{y\}}}}[\es'(y)]\\
		&= \ 1 \cdot p(y) + 0 \cdot (1-p(y))\\
		&= \ p(y)
		\end{align*}
		\noindent and
		\begin{align*}
		\alpha^1_g &= \ \sum_{\substack{S \subseteq \{y\}\\|S|=1}} \, \, \mathbb{E}_{\es' \sim \Pi_{p_{|\{y\}}}}[\es'(y) \mid \es' \in \asm(\es_{|\{y\}},S)]\\
		&= \ \mathbb{E}_{\es' \sim \Pi_{p_{|\{y\}}}}[\es'(y) \mid \es' \in \asm(\es_{|\{y\}},\{y\})]\\
		&= \ \es(y).
		\end{align*}

    \item[Constant gate.] $g$ is a constant gate with
		label~$a\in \{0,1\}$, and~$\var(g) = \emptyset$. We recall the mathematical
		convention that there is a unique function with the
		empty domain and, hence, a unique entity
		over~$\emptyset$. But then
		\begin{align*}
		\alpha_g^0 &= \ \sum_{\substack{S \subseteq \emptyset \\|S|=0}} \, \, \mathbb{E}_{\es' \sim \Pi_{p_{|\emptyset}}}[a \mid \es' \in \asm(\es_{|\emptyset},S)]\\
		&= \  \mathbb{E}_{\es' \sim \Pi_{p_{|\emptyset}}}[a \mid \es' \in \asm(\es_{|\emptyset},\emptyset)]\\
		&= \ a.
		\end{align*}
    \item[$\lnot$-gate.] $g$ is a~$\lnot$-gate with input
			gate~$g'$. Notice
			that~$\var(g)=\var(g')$. Then, since for~$\es' \in \eset(\var(g))$ we have that~$R_g(\es') = 1- R_{g'}(\es')$, we have
            \begin{align*}
            \alpha_g^l \ &= \ \sum_{\substack{S \subseteq \var(g)\\|S|=l}} \, \, \mathbb{E}_{\es' \sim \Pi_{p_{|\var(g)}}}[1- R_{g'}(\es') \mid \es' \in \asm(\es_{|\var(g)},S)].
            \end{align*}
            \noindent By linearity of expectations we deduce that
            \begin{align*}
             \alpha_g^l \ &= \ \sum_{\substack{S \subseteq \var(g)\\|S|=l}} \, \, \mathbb{E}_{\es' \sim \Pi_{p_{|\var(g)}}}[1\mid \es' \in \asm(\es_{|\var(g)},S)]\\
            & \hspace{60pt} - \sum_{\substack{S \subseteq \var(g)\\|S|=l}} \, \, \mathbb{E}_{\es' \sim \Pi_{p_{|\var(g)}}}[R_{g'}(\es')\mid \es' \in \asm(\es_{|\var(g)},S)]\\
            &= \ \big[\sum_{\substack{S \subseteq \var(g)\\|S|=l}} 1\big] - \alpha_{g'}^l\\
            &= \ \binom{|\var(g)|}{l} - \alpha_{g'}^l
            \end{align*}
                        for every~$l \in \{0, \ldots, |\var(g)|\}$.
			By induction, the values~$\alpha_{g'}^l$ for~$l\in\{0,\ldots,|\var(g)|\}$ have
			already been computed. 
%Mikaël: I am commenting this out (if we want to say something like this we should say it already in the previous section, when we pass from \shap to \diff_k
%            Moreover, the values~$\binom{|\var(g)|}{l}$
%			for~$l\in\{0,\ldots,|\var(g)|\}$ can be computed in polynomial time
%			since~$n$,~$l$ and~$|\var(g)|$ are all given in unary when computing
%			the~$\shap$-score. 
            Thus, we can compute all the values~$\alpha_g^l$ for~$l\in
			\{0,\ldots,|\var(g)|\}$ in polynomial time.
\item[$\lor$-gate.] $g$ is an~$\lor$-gate. By assumption, recall that~$g$ is deterministic, smooth and has fan-in at most 2.
If~$g$ has only one input~$g'$, then clearly~$\var(g)=\var(g')$ and~$\alpha_g^l = \alpha_{g'}^l$ for
every~$l\in \{0,\ldots,|\var(g)|\}$. Thus, assume that~$g$ has exactly
two input gates~$g_1$ and~$g_2$, and recall that~$\var(g_1)
= \var(g_2) = \var(g)$, because~$g$ is smooth. Given that~$g$
is deterministic, observe that for every~$\es' \in \eset(\var(g))$ we have~$R_g(\es') = R_{g_1}(\es') + R_{g_1}(\es')$.
But then for~$l\in \{0,\ldots,|\var(g)|\}$ we have
\begin{align*}
\alpha_g^l &= \ \sum_{\substack{S \subseteq \var(g)\\|S|=l}} \, \, \mathbb{E}_{\es' \sim \Pi_{p_{|\var(g)}}}[R_{g_1}(\es') + R_{g_2}(\es') \mid \es' \in \asm(\es_{|\var(g)},S)]\\
&= \ \sum_{\substack{S \subseteq \var(g)\\|S|=l}} \, \, \mathbb{E}_{\es' \sim \Pi_{p_{|\var(g)}}}[R_{g_1}(\es') \mid \es' \in \asm(\es_{|\var(g)},S)]\\
& \hspace{60pt} + \sum_{\substack{S \subseteq \var(g)\\|S|=l}} \, \, \mathbb{E}_{\es' \sim \Pi_{p_{|\var(g)}}}[R_{g_2}(\es') \mid \es' \in \asm(\es_{|\var(g)},S)]\\
&= \alpha_{g_1}^l + \alpha_{g_2}^l,
\end{align*}
where the second equality is by linearity of the expectation, and the last equality is valid because~$g$ is smooth.
By induction, the
values~$\alpha_{g_1}^l$ and~$\alpha_{g_2}^l$, for
each~$l\in\{0,\ldots,|\var(g)|\}$, have already been
computed. Therefore, we can compute all the values~$\alpha_g^l$
for~$l\in \{0,\ldots,|\var(g)|\}$ in polynomial time.

\item[$\land$-gate.] $g$ is an~$\land$-gate. By assumption, recall that~$g$ is decomposable and has fan-in at most 2.
If~$g$ has only one input~$g'$, then clearly~$\var(g)=\var(g')$ and~$\alpha_g^l = \alpha_{g'}^l$ for
every~$l\in \{0,\ldots,|\var(g)|\}$. Thus, assume that~$g$ has exactly
two input gates~$g_1$ and~$g_2$. For~$\es' \in \eset(\var(g))$ we have that~$R_g(\es') = R_{g_1}(\es'_{|\var(g_1)}) \cdot R_{g_2}(\es'_{|\var(g_2)})$.
Moreover, since~$\var(g) = \var(g_1) \cup \var(g_2)$ and~$\var(g_1)\cap \var(g_2) = \emptyset$ (because~$g$ is decomposable), observe that every~$S\subseteq \var(g)$ can be uniquely decomposed into~$S_1 \subseteq \var(g_1)$, $S_2 \subseteq \var(g_2)$ such that~$S = S_1\cup S_2$. Henceforth, for~$l\in \{0,\ldots,|\var(g)|\}$ we have
\begin{align*}
\alpha_g^l &= \ \sum_{\substack{S_1 \subseteq \var(g_1)\\|S_1| \leq l}} \, \sum_{\substack{S_2 \subseteq \var(g_2)\\|S_2| = |\var(g)| - |S_1|}}\, \mathbb{E}_{\es' \sim \Pi_{p_{|\var(g)}}}[R_{g_1}(\es'_{|\var(g_1)}) \cdot R_{g_2}(\es'_{|\var(g_2)}) \mid \es' \in \asm(\es_{|\var(g)},S_1\cup S_2)].\\
\end{align*}
But, by definition of the product distribution~$\Pi_{p_{|\var(g)}}$, we have that~$R_{g_1}(\es'_{|\var(g_1)})$ and~$R_{g_2}(\es'_{|\var(g_2)})$ are independent random variables, hence we deduce
\begin{align*}
\alpha_g^l &= \ \sum_{\substack{S_1 \subseteq \var(g_1)\\|S_1| \leq l}} \, \sum_{\substack{S_2 \subseteq \var(g_2)\\|S_2| = |\var(g)| - |S_1|}}\, \bigg[\mathbb{E}_{\es' \sim \Pi_{p_{|\var(g)}}}[R_{g_1}(\es'_{|\var(g_1)})\mid \es' \in \asm(\es_{|\var(g)},S_1\cup S_2)]\\
& \hspace{110pt} \times \mathbb{E}_{\es' \sim \Pi_{p_{|\var(g)}}}[R_{g_2}(\es'_{|\var(g_2)})\mid \es' \in \asm(\es_{|\var(g)},S_1\cup S_2)]\bigg].\\
&= \ \sum_{\substack{S_1 \subseteq \var(g_1)\\|S_1| \leq l}} \, \sum_{\substack{S_2 \subseteq \var(g_2)\\|S_2| = |\var(g)| - |S_1|}}\, \bigg[\mathbb{E}_{\es'' \sim \Pi_{p_{|\var(g_1)}}}[R_{g_1}(\es'')\mid \es'' \in \asm(\es_{|\var(g_1)},S_1)]\\
& \hspace{110pt} \times \mathbb{E}_{\es'' \sim \Pi_{p_{|\var(g_2)}}}[R_{g_2}(\es'')\mid \es'' \in \asm(\es_{|\var(g_2)},S_2)]\bigg],
\end{align*}
where the last equality is simply by definition of the product distributions, and because~$R_{g_1}(\es'_{|\var(g_1)})$ is independent of the value~$\es'_{|\var(g_2)}$, and similarly for~$R_{g_2}(\es'_{|\var(g_2)})$. But then
\begin{align*}
\alpha_g^l &= \ \sum_{\substack{S_1 \subseteq \var(g_1)\\|S_1| \leq l}} \, \mathbb{E}_{\es'' \sim \Pi_{p_{|\var(g_1)}}}[R_{g_1}(\es'')\mid \es'' \in \asm(\es_{|\var(g_1)},S_1)] \, \times \sum_{\substack{S_2 \subseteq \var(g_2)\\|S_2| = |\var(g)| - |S_1|}}\, \\
& \hspace{200pt} \mathbb{E}_{\es'' \sim \Pi_{p_{|\var(g_2)}}}[R_{g_2}(\es'')\mid \es'' \in \asm(\es_{|\var(g_2)},S_2)]\\
&= \ \sum_{\substack{S_1 \subseteq \var(g_1)\\|S_1| \leq l}} \, \mathbb{E}_{\es'' \sim \Pi_{p_{|\var(g_1)}}}[R_{g_1}(\es'')\mid \es'' \in \asm(\es_{|\var(g_1)},S_1)] \times  \alpha_{g_2}^{|\var(g)|-|\var(S_1)|}\\
&= \sum_{l_1 = 0}^l \alpha_{g_2}^{|\var(g)|-l_1} \times \sum_{\substack{S_1 \subseteq \var(g_1)\\|S_1| = l_1}} \, \mathbb{E}_{\es'' \sim \Pi_{p_{|\var(g_1)}}}[R_{g_1}(\es'')\mid \es'' \in \asm(\es_{|\var(g_1)},S_1)]\\
&= \sum_{l_1 = 0}^l \alpha_{g_2}^{|\var(g)|-l_1} \times \alpha_{g_1}^{l_1}\\
&= \sum_{\substack{l_1 \in \{0,\ldots,|\var(g_1)|\}\\l_2 \in \{0,\ldots,|\var(g_2)|\}\\ l_1 + l_2 = l}} \alpha_{g_1}^{l_1} \cdot \alpha_{g_2}^{l_2}.
\end{align*}

By induction, the values~$\alpha_{g_1}^{l_1}$ and~$\alpha_{g_2}^{l_2}$, for
each~$l_1 \in\{0,\ldots,|\var(g_1)|\}$
and~$l_2 \in \{0,\ldots,|\var(g_2)|\}$, have already been
computed. Therefore, we can compute all the values~$\alpha_g^l$
for~$l\in \{0,\ldots,|\var(g)|\}$ in polynomial time.
\end{description}
This concludes
the proof of Lemma~\ref{lem:H} and, hence, the proof of
Theorem~\ref{thm:shapscore-d-Ds-prod}.
\end{proof}

\end{document}